\tikzstyle{vertex}=[circle,draw=black,fill=black,inner sep=0,minimum size=3pt,text=white,font=\footnotesize]
\newtheorem{theorem}{Theorem}
\newtheorem*{theorem*}{Theorem}
\newtheorem*{conjecture*}{Conjecture}
\newtheorem{proposition}[theorem]{Proposition}
\newtheorem{lemma}[theorem]{Lemma}
\theoremstyle{remark}
\newtheorem{definition}{Definition}
\newtheorem*{remark*}{Remark}
\newcommand{\sign}{\mathsf{sign}}
\newcommand{\R}{\mathbb{R}}
\newcommand{\N}{\mathbb{N}}
\newcommand{\mc}{\mathcal}
\newcommand{\ep}{\epsilon}
\newcommand{\sub}{\subseteq}
\newcommand{\wh}{\widehat}
\newcommand{\eps}{\varepsilon}
\DeclareMathOperator{\VCdim}{\mathtt{VCdim}}
\DeclareMathOperator{\HSdim}{\mathtt{HS}}
\DeclareMathOperator{\argmax}{argmax}
\DeclareMathOperator{\Ldim}{\mathtt{Ldim}}
\DeclareMathOperator{\List}{\mathtt{LR}}
\title{Replicability and stability in learning}
\author{Zachary Chase}
\thanks{The first author is partially supported by Ben Green's Simons Investigator Grant 376201 and gratefully acknowledges the support of the Simons Foundation.}
\email{zachary.chase@maths.ox.ac.uk}
\address{Mathematical Institute, Andrew Wiles Building, Radcliffe Observatory Quarter, Woodstock Road, Oxford, UK}
\author{Shay Moran}
\thanks{Shay Moran is a Robert J.\ Shillman Fellow; he acknowledges support by ISF grant 1225/20, by BSF grant 2018385, by an Azrieli Faculty Fellowship, by Israel PBC-VATAT, by the Technion Center for Machine Learning and Intelligent Systems (MLIS), and by the the European Union (ERC, GENERALIZATION, 101039692). Views and opinions expressed are however those of the author(s) only and do not necessarily reflect those of the European Union or the European Research Council Executive Agency. Neither the European Union nor the granting authority can be held responsible for them.}
\email{smoran@technion.ac.il}
\address{Faculty of Mathematics, Technion-IIT, Haifa, Israel}
\author{Amir Yehudayoff}
\email{amir.yehudayoff@gmail.com}
\address{Faculty of Mathematics, Technion-IIT, Haifa, Israel}
\date{}
\begin{document}

\begin{abstract}
Replicability is essential in science as it allows us to validate and verify research findings. Impagliazzo, Lei, Pitassi and Sorrell (`22) recently initiated the study of replicability in machine learning. A learning algorithm is replicable if it typically produces the same output when applied on two i.i.d.\ inputs using the same internal randomness. 
We study a variant of replicability that does not involve fixing the randomness.
An algorithm satisfies this form of replicability if it typically produces the same output when applied on two i.i.d. inputs
(without fixing the internal randomness).  
 This variant is called global stability and was introduced by Bun, Livni and Moran ('20) in the context of differential privacy.

Impagliazzo et al.\ showed how to boost any replicable algorithm so that it produces the same output with probability arbitrarily close to 1. In contrast, we demonstrate that for numerous learning tasks, global stability can only be accomplished weakly, where the same output is produced only with probability bounded away from 1. To overcome this limitation, we introduce the concept of list replicability, which is equivalent to global stability. Moreover, we prove that list replicability can be boosted so that it is achieved with probability arbitrarily close to 1. We also describe basic relations between standard learning-theoretic complexity measures and 
list replicable numbers.  Our results, in addition, imply that besides trivial cases, replicable algorithms (in the sense of Impagliazzo et al.) must be randomized. 

The proof of the impossibility result is based on a topological fixed-point theorem. For every algorithm, we are able to locate a ``hard input distribution” by applying the Poincar\'{e}-Miranda theorem in a related topological setting. The equivalence between global stability and list replicability is algorithmic.
\end{abstract}

\maketitle

\section{Introduction}
Replicability is a basic principle underlying the scientific method.
 A study is replicable if it \emph{reliably yields the same results when conducted again 
 using new data}.\footnote{Replicability is closely related to reproducibility: 
 the difference between the two notions is that in reproducibility, one applies the same methods using the {\em same} data, whereas in replicability the data is resampled. Reproducibility is reminiscent of the notion of pseudo-deterministic algorithms~\cite{gat2011probabilistic},
 which are (randomized) algorithms that on each input 
 have a unique output with high probability.} 
 
 Replicability was recently defined and investigated in the context of machine learning~\cite{impagliazzo2022reproducibility,ahn2022reproducibility,esfandiari2022reproducible,bun2023stability}.
The pioneering work by \cite{impagliazzo2022reproducibility} introduced the notion of replicable learning algorithms:
a learning algorithm is \emph{replicable} if it typically produces the same output when applied on two i.i.d.\ inputs using the same internal randomness. This definition requires using the same internal randomness on two independent executions. 
There are scenarios, however, for which it is difficult to use exactly the same internal randomness; e.g., when the algorithm is executed in distant locations, or when the randomness is not recorded. 

Thus, continuing the investigation of (informally speaking) replicability, in this work we study learning algorithms that \emph{typically produce the same predictor when applied on two i.i.d.\ inputs} (but without sharing the randomness).
This form of replicability is defined in the literature as {\em global stability}: it was introduced by~\cite{bun2020equivalence}
and refined by~\cite{ghazi2021sample,GhaziKM21} as a mean towards designing differentially private learning algorithms. 

In routine experiments, it is most desirable to have replicability that generates the same output $99\%$ of the time, while global stability may be satisfied only ~$1\%$ of the time. Naturally, then, the following question guides our work:
    \begin{tcolorbox}
\begin{center}
    Can replicability be boosted?
\end{center}
\noindent {\em Can an algorithm that outputs the same predictor 
$1\%$ of the time be boosted to an algorithm that produces the same predictor $99\%$ of the time?}
\end{tcolorbox}


The authors of~\cite{impagliazzo2022reproducibility} showed how to boost replicabile learning algorithms. 
Our first result is a negative answer to the above boosting question with respect to global stability.
We show that global stability cannot, in general, be boosted. 
Specifically, we exhibit concept classes that can be learned 
with global stability parameter $\rho = 1\%$, but cannot be learned with parameter larger than~$1\%$. Consequently, it is not always feasible to obtain desirable levels of global stability in learning tasks.

Our second main contribution circumvents this impossibility result
by introducing the notion of \emph{list replicability}.
We show that an algorithm with global stability parameter~$\rho$ can be effectively converted
into an algorithm that with high probability outputs a predictor from a {\em fixed data-independent list} of size at most $\tfrac{1}{\rho}$. 
Instead of a single predictor that is typically outputted, 
there is a fixed short list that nearly always contains the output predictor.
Notice that this form of replicability can be tested and verified by publishing the short list and testing whether it typically contains the output.

A similar discrepancy occurs in coding theory.
When the amount of noise is small,
the option of unique decoding is available,
but when the amount of noise is high,
the only thing one can aim for is list decoding. 


\section{Main definitions}\label{definitions}

The purpose of this section is to formally state the main definitions
we introduce, and to compare them with related replicability notions.

Let $X$ be a set and $\mc{H} \sub \{\pm\}^X$ a hypothesis class\footnote{We use $\{\pm\}$ as shorthand for $\{-1,+1\}$.}.
We use standard notation and terminology;
for the basic definitions of PAC learning we refer to the book~\cite{shalev2014understanding}.
A predictor is a function from $X$ to $\{\pm\}$.
A learning rule $\mc{A}$ maps a data sample $S \in (X\times\{\pm\})^n$
to a predictor $\mc{A}(S)$.
We will apply $\mc{A}$ on a sample $S$ that comes from a product distribution $\mc{D}^n$
that is unknown to the algorithm. 
The population loss of $h$ with respect to 
$\mc{D}$ is denoted by $L_\mc{D}(h)$,
and the empirical loss with respect to data $S$ is denoted by $L_S(h)$.
We assume that the ground set {$X$ is countable}
so that we can ignore measurability issues. 
A distribution $\mc{D}$ is realizable by $\mc{H}$ if
$\inf \{ L_\mc{D}(h) : h \in \mc{H}\} = 0$.

\subsection{Global stability}

As a means to prove that the 
Littlestone dimension captures PAC learning with differential privacy, ~\cite{bun2020equivalence} introduced the notion of global stability.
However, motivated by the study of replicability, 
it is natural to view global stability as an end in and of itself rather than as just a means.

\begin{definition}
A learning rule $\mc{A}$ is called $\rho$-globally stable if for any distribution $\mc{D}$ over inputs, there exists a predictor $h_\mc{D}$ such that
\begin{equation}\label{globally-stable}
\Pr_{S \sim \mc{D}^n}[\mc{A}(S) = h_\mc{D}] \geq \rho.
\end{equation}
\end{definition}

There is a qualitatively equivalent way to state the same definition. 
A learning rule $\mc{A}$ is called $\rho$-globally stable if for any distribution $\mc{D}$ over inputs, it holds that
\begin{equation}\label{global-stability-2}
\Pr_{S,S'\sim \mc{D}^n}[\mc{A}(S) = \mc{A}(S') ] \geq \rho ,
\end{equation}
where $S$ and $S'$ are independent. 
To see the equivalence between the two definitions,
for a given distribution $\mc{D}$, denote by $P(f) :=\Pr_{S \sim \mc{D}^n} [\mc{A}(S) = f]$,
and observe that\footnote{
The set $\{h : P(h)>0\}$ is always countable.}
$$\Pr_{S,S'}[\mc{A}(S) = \mc{A}(S') ]
= \sum_f P(f)^2$$
and
\[ \max_{f} P(f) \geq \sum_f P(f)^2 \geq \max_{f}P(f)^2. \]
The two definitions are thus equivalent up to a quadratic loss in $\rho$.

Globally stable algorithms are not so interesting unless they also provide accuracy guarantees. 
A learning rule $\mc{A}$ is called $(\rho,\ep)$-globally stable 
for $\mc{H}$ if
there exists $n$ so that for every distribution $\mc{D}$ that is realizable by $\mc{H}$, there exists a predictor $h = h_\mc{D}$ such that $L_{\mc{D}}(h) < \epsilon$ and 
\[
\Pr_{S \sim \mc{D}^n}[\mc{A}(S) = h] \geq \rho.
\]
We say that the class $\mc{H}$ is learnable with global stability parameter $\rho$ 
    if for every $\eps>0$ there exists an $(\eps,\rho)$-globally stable learner for $\mc{H}$.
This leads to one of the main definitions of this text.

\begin{definition}
The global stability parameter $\rho(\mc{H})$ of the class $\mc{H}$ is the supremum over all $\rho \in [0,1]$ for which
$\mc{H}$ is learnable with global stability parameter $\rho$.
  We say that $\mc{H}$ is globally stable learnable if $\rho(\mc{H}) >0$.
\end{definition}

Every finite class $\mc{H}$ is globally stable with $\rho(\mc{H})\geq 1/\lvert\mc{H}\rvert$.
The global stability number is defined for every concept class $\mc{H}$, even if $\mc{H}$ is not globally stable, whence $\rho(\mc{H})=0$.
It is an asymptotic measure of how ``stable'' or ``replicable'' 
learning over $\mc{H}$ can potentially be.
The larger the global stability parameter of~$\mc{H}$ is,
the more ``replicable'' learning over $\mc{H}$ can be.

The definition of globally stable learners is distribution-free in the sense that the sample complexity bound $n$ applies to all realizable distributions. 
It is worth noting the following monotonicity:
if $\mc{H}$ has an $(\eps,\rho)$-globally stable learner with sample size $n$
then it also has an $(\eps',\rho')$-globally stable learner with sample size $n'$
for all $\eps'\geq \eps$, $\rho' \leq \rho$ and $n' \geq n$.

\subsection{Comparison to replicability}

The notion of replicability was recently introduced in ~\cite{impagliazzo2022reproducibility}. 

\begin{definition}
\label{def:replicability}
A learning rule $\mc{A}$ is called $\rho$-replicable if 
for every distribution $\mc{D}$, it holds that
\[
\Pr_{S,S'\sim \mc{D}^n, r} [\mc{A}(S, r) = \mc{A}(S', r) ] \geq \rho ,
\]
where $r$ is the internal randomness of $\mc{A}$,
and $S,S'$ and $r$ are independent. 
\end{definition}

Replicability requires using the same internal randomness
on two independent executions. 
Using exactly the same internal randomness is sometimes difficult or even impossible; e.g., when the algorithm is executed in distant locations, or when the randomness is not recorded. 
Global stability does not include this requirment. 

To further emphasize the distinction between replicabililty and global stability, recall from~\eqref{global-stability-2} that an algorithm is $\rho$-global stability if
\[
\Pr_{S,S'\sim \mc{D}^n, r,r'}[\mc{A}(S,r) = \mc{A}(S',r') ] \geq \rho,
\]
where $S,S',r$ and $r'$ are independent. 
Replicability depends on the specific way the internal randomness is used,
whereas global stability just depends on the input-output behavior
of the algorithm.

Another difference between the two notions is that
globally stable algorithms can be derandomized 
while replicable algorithms can not.
This follows from our results below (see the discussion after Theorem~\ref{thm:boostneg}).

\begin{remark*}
{\em The replicability parameter in~\cite{impagliazzo2022reproducibility}
is of the form $1-\rho$, indicating that it is close to one.
We use $\rho$ instead of $1-\rho$ because (as we prove below)
it is not always possible to have the parameter close to one for global stability.}
\end{remark*}


%


\subsection{List replicability}
Instead of allowing a single data-independent prediction~$h$,
we can allow a short list $h_1,\ldots,h_L$ of data-independent predictions.
Global stability says that $h$ is the output of the algorithm with {\em non-negligible probability}.
By allowing lists, we can require that one of $h_1,\ldots,h_L$
is the output {\em with high probability}. 

\begin{definition}[List replicable learning]
A learning rule $\mc{A}$ is called $(\eps,L)$-list replicable learner for the class $\mc{H}$
if for every $\delta>0$, there exists $n=n(\eps,L,\delta)$ such that for every distribution $\mc{D}$ that is realizable by~$\mc{H}$,
there exist hypotheses $h_1,h_2,\ldots,  h_L$ such that
$$\Pr_{S\sim \mc{D}^n}[\mc{A}(S) \in\{h_1,\ldots, h_L\}] \geq 1-\delta$$
and 
for all $\ell \in [L]$,
$$L_\mc{D}(h_\ell)\leq \eps.$$
\end{definition}

Thus, list replicability is stronger than global stability on two fronts: first, we are guaranteed a short list of possible outputs, which of course immediately implies global stability, but, secondly and more importantly, the output \emph{nearly always} belongs to the short list, rather than just non-negligibly often. 

 We say that $\mc{H}$ is learnable with replicability list size $L$
    if for every $\eps>0$, there exists an $(\eps,L)$-list replicable learner for $\mc{H}$.
 
\begin{definition}[List replicability number]
    The list replicability number of $\mc{H}$ is defined as
    \[\List(\mc{H}) := \min\bigl\{L \in \N : \text{$\mc{H}$ is learnable with replicability list size $L$} \bigr\}.\]
    We say that $\mc{H}$ is replicable list learnable if $\List(\mc{H}) < \infty$.
\end{definition}

Every finite class $\mc{H}$ is list replicable learnable with $\List(\mc{H})\leq \lvert\mc{H}\rvert$.
The list replicability number is 
a different asymptotic measure of how ``stable'' learning over $\mc{H}$ can potentially be.
The smaller the list replicability parameter of~$\mc{H}$ is,
the more ``stable'' learning over $\mc{H}$ can be.

\section{Main results}

\subsection{Basic properties}

The first question we address is {\em can global stability be boosted?}
Boosting the global stability parameter arbitrarily close to one
means that we can make algorithms more and more stable. 
This, however, turns out to be impossible in general.

\begin{theorem}\label{thm:boostneg}
For every $\rho_0 >0$, there exists a globally stable class $\mc{H}$ with $\rho(\mc{H}) < \rho_0$.
\end{theorem}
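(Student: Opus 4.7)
The plan is to construct, for each $\rho_0 > 0$, a finite class $\mc{H} = \{h_1, \ldots, h_k\}$ with $k > 1/\rho_0$ satisfying $0 < \rho(\mc{H}) < \rho_0$. Global stability ($\rho(\mc{H}) > 0$) is immediate from finiteness: a deterministic ERM rule is a PAC learner for any finite $\mc{H}$, and by pigeonhole on any realizable distribution it outputs some $h \in \mc{H}$ with probability $\geq 1/|\mc{H}| = 1/k$; uniform convergence in the realizable regime makes that $h$ low-loss. So the content is the upper bound $\rho(\mc{H}) < \rho_0$.

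For the upper bound, the plan is to design $\mc{H}$ together with a continuous $(k-1)$-parameter family of realizable distributions $\{\mc{D}_t : t \in \Delta^{k-1}\}$ such that (i) at the $i$-th vertex $v_i$ of the simplex the essentially unique low-loss hypothesis is $h_i$, and (ii) distinct $h_i, h_j$ remain at $L_{\mc{D}_t}$-distance bounded away from $0$ throughout the simplex. A natural template is ``witness gadgets'' $X = X_1 \sqcup \cdots \sqcup X_k$ with each $h_i$ distinguished on $X_i$, and $\mc{D}_t$ placing mass $t_i$ on $X_i$; the details must be arranged so that $\mc{D}_t$ is genuinely realizable by $\mc{H}$ for all interior $t$.

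Given any $(\rho,\eps)$-globally stable learner $\mc{A}$ with sample size $n$, consider $p_i : \Delta^{k-1} \to [0,1]$ defined by $p_i(t) := \Pr_{S \sim \mc{D}_t^n}\bigl[\mc{A}(S) \in B_\eps(h_i)\bigr]$, where the $B_\eps(h_i)$ are pairwise disjoint neighborhoods guaranteed by (ii). Each $p_i$ is polynomial in $t$, and (i) forces the boundary behavior $p_i(v_i) \geq \rho$ while $p_j(v_i) \approx 0$ for $j \neq i$. Setting $F_i(t) := p_i(t) - p_{i+1}(t)$ for $i = 1, \ldots, k-1$, the boundary behavior sets up the sign hypotheses of the Poincar\'e-Miranda theorem (after the standard identification of the simplex with a cube), producing an interior $t^* \in \Delta^{k-1}$ at which $p_1(t^*) = \cdots = p_k(t^*)$. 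Disjointness of the $B_\eps(h_i)$ forces $\sum_i p_i(t^*) \leq 1$, so the common value is $\leq 1/k$. Since at $\mc{D}_{t^*}$ every $(\rho, \eps)$-stable output must lie in some $B_\eps(h_i)$, this yields $\rho \leq 1/k < \rho_0$, contradicting $\rho > 1/k$.

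The main obstacle is constructing $\mc{H}$ and $\{\mc{D}_t\}$ so that three requirements hold simultaneously: each $\mc{D}_t$ is realizable by $\mc{H}$ (delicate since convex combinations of realizable distributions need not be realizable); the vertex-uniqueness condition (i) holds cleanly so that the Poincar\'e-Miranda sign assumptions on the faces can be verified; and the $B_\eps(h_i)$ stay pairwise disjoint uniformly in $t$, with every $\eps$-good predictor at $t^*$ falling into one of them. Arranging realizability at the interior points of the simplex is the most delicate part, and may require either a common ``consistent extension'' of the $h_i$'s or replacing the simplex by a different contractible parameter space (e.g., a cube with appropriate face labellings) on which Poincar\'e-Miranda still applies.
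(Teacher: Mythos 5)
Your high-level strategy is the same as the paper's: secure $\rho(\mc{H})>0$ by finiteness, then rule out large $\rho$ by embedding a continuously parameterized family of realizable distributions and applying Poincar\'e--Miranda to the output statistics of an arbitrary learner to locate a ``balanced'' distribution where no single predictor is output with probability much above $1/k$. However, the proposal stops exactly where the paper's actual work happens: the class, the family $\{\mc{D}_t\}$, and the verification of the face conditions are all deferred as ``the main obstacle.'' The paper resolves the realizability worry by taking $\mc{H}=\{\pm\}^d$, the full cube on $d$ shattered points: for $t\in[-1,1]^{d-1}$ the distribution $\mc{D}_t$ puts mass $\lvert t_j\rvert/(d-1)$ on the single labeled example $(x_j,\sign(t_j))$ and the remaining mass on an anchor $(x_0,+)$. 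Since every labeling of $x_0,\ldots,x_{d-1}$ lies in $\mc{H}$, realizability is automatic at every $t$ --- there is no convex-combination issue because at each parameter the support contains only one label per point. This construction (plus Theorem~\ref{thm:recip} or the pigeonhole lower bound) is what makes Theorem~\ref{thm:boostneg} go through, and it is missing from your write-up.

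Beyond the missing construction, two steps of your mechanism would fail as written. First, Poincar\'e--Miranda needs sign conditions on entire faces of the cube, not just at the vertices $v_i$; your conditions (i)--(ii) only control vertex behavior. The paper engineers face-wise control by partitioning the \emph{whole} output space $\{\pm\}^X$ (not just $\mc{H}$) into $k+1$ prefix classes $\mc{F}_0,\ldots,\mc{F}_k$ ($\mc{F}_0$: all $+$ on $x_1,\ldots,x_k$; $\mc{F}_j$: $+$ on $x_1,\ldots,x_{j-1}$ and $-$ on $x_j$), so that on the face $t_j=-1$ every member of $\mc{F}_0$ errs on a point of mass $\ge 1/k$, and on $t_j=+1$ every member of $\mc{F}_j$ does, uniformly over the face; combined with a reduction to empirical learners (Lemma~\ref{empiricalwlog}) and the $+\delta t_j$ perturbation in $g_j=\Pr[\mc{A}(S)\in\mc{F}_0]-\Pr[\mc{A}(S)\in\mc{F}_j]+\delta t_j$, this yields the required signs. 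Second, your events $\{\mc{A}(S)\in B_\eps(h_i)\}$ with neighborhoods disjoint ``uniformly in $t$'' do not cover all outputs of an improper learner: at an interior $t^*$ some coordinates may be tiny, so a predictor can have $L_{\mc{D}_{t^*}}\le\eps$ yet lie in none of the $B_\eps(h_i)$, and then $p_i(t^*)\le 1/k$ says nothing about the probability of the stable output $h_{\mc{D}_{t^*}}$. The partition of $\{\pm\}^X$ closes this hole: every output lies in exactly one class, equality of the $k+1$ class probabilities at $t^*$ forces each to be at most $\tfrac{1}{k+1}+k\delta$, hence every individual predictor is output with probability at most that. These are genuine gaps rather than routine details, even though you correctly identified where they lie.
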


To prove this impossibility result, for a given $\rho_0$ we find a finite class $\mc{H}$
so that for every learner~$\mc{A}$ for~$\mc{H}$ we 
can locate a distribution $\mc{D}$ that exposes the instability of $\mc{A}$.
We develop a mechanism for locating $\mc{D}$ given the structure of $\mc{H}$
and the  ``functionality'' of the algorithm $\mc{A}$. 
The mechanism, interestingly, 
relies on a topological fixed-point theorem.
For more details, see Sections~\ref{sec:geo} and~\ref{sec:instab}.

Theorem~\ref{thm:boostneg} implies that in some cases replicable algorithms must be randomized.
First, notice that for deterministic algorithms, the definitions of replicability and global-stability coincide. 
Second, the theorem says that there is a finite class with global stability parameter $<1\%$.
Results in~\cite{impagliazzo2022reproducibility} show that this class can be learned with replicablity parameter
$99\%$. The underlying algorithm, therefore, cannot be derandomized.

Theorem~\ref{thm:boostneg} implies that global stability numbers can be arbitrarily close to zero.
{\em Can they take any possible value between zero and one?}
It turns out that global stability numbers form a discrete set;
they are always reciprocals of integers (or are zero).

\begin{theorem}
\label{thm:recip}
For every class $\mc{H}$, one has
$$\rho(\mc{H})  = \frac{1}{\List(\mc{H})}.$$
\end{theorem}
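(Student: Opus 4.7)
The plan is to prove the equality by establishing the two inequalities separately.

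The easy direction $\rho(\mc{H}) \ge 1/\List(\mc{H})$ uses pigeonhole. Let $L := \List(\mc{H})$ and fix $\delta > 0$. For every $\eps > 0$ there is an $(\eps, L)$-list replicable learner $\mc{A}$; for each realizable distribution $\mc{D}$ there exists a data-independent list $h_1, \ldots, h_L$ with $L_\mc{D}(h_\ell) \le \eps$ for every $\ell$ and $\Pr_{S \sim \mc{D}^n}[\mc{A}(S) \in \{h_1, \ldots, h_L\}] \ge 1 - \delta$. Pigeonhole yields an index $\ell$ with $\Pr[\mc{A}(S) = h_\ell] \ge (1 - \delta)/L$, so $\mc{A}$ is $(\eps, (1-\delta)/L)$-globally stable. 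Letting $\delta \to 0$ gives $\rho(\mc{H}) \ge 1/L$.

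For the reverse inequality I would argue by contrapositive: assuming $\rho(\mc{H}) > 1/L$ for some integer $L$, I will exhibit an $(\eps, L-1)$-list replicable learner, which forces $\List(\mc{H}) \le L - 1$. Fix an $(\eps/3, \rho)$-globally stable learner $\mc{A}$ with $\rho > 1/L$, and choose $\eta > 0$ so that $\rho - 2\eta > 1/L$. For each realizable $\mc{D}$, define the data-independent candidate list
\[
C_\mc{D} := \bigl\{h : \Pr_{S \sim \mc{D}^n}[\mc{A}(S) = h] \ge \rho - 2\eta \text{ and } L_\mc{D}(h) \le \eps\bigr\}.
\]
Since output probabilities sum to one, $|C_\mc{D}| \le 1/(\rho - 2\eta) < L$, so $|C_\mc{D}| \le L - 1$, and the distinguished $h_\mc{D}$ from the global stability definition of $\mc{A}$ lies in $C_\mc{D}$.

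I would construct $\mc{A}'$ by iterative propose-and-verify: on fresh sample-chunks, propose a candidate $h = \mc{A}(\text{chunk})$, estimate its output frequency $\hat P(h)$ by running $\mc{A}$ on $K$ additional independent chunks, estimate its loss $\hat L(h)$ on a validation chunk, and output $h$ if both $\hat P(h) > \rho - \eta$ and $\hat L(h) < 3\eps/4$, otherwise continue. Since $h_\mc{D}$ is both popular ($P(h_\mc{D}) \ge \rho$) and low-loss ($L_\mc{D}(h_\mc{D}) < \eps/3$), by Chernoff it passes both checks with high probability per iteration, so the process terminates within $O(\log(1/\delta)/\rho)$ iterations with high probability. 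The main obstacle, where the construction requires real care, is bounding the probability that the eventual output falls outside $C_\mc{D}$, given that $\mc{H}$ may contain infinitely many low-probability hypotheses and so a direct union bound is unavailable. This is resolved by a weighted sum: for each iteration,
\[
\Pr\bigl[P(\mc{A}(S)) < \rho - 2\eta \text{ but } \hat P(\mc{A}(S)) > \rho - \eta\bigr] = \sum_{h : P(h) < \rho - 2\eta} P(h) \cdot \Pr\bigl[\hat P(h) > \rho - \eta\bigr] \le e^{-cK\eta^2},
\]
since the weights $P(h)$ sum to at most $1$ and the Chernoff factor is uniformly small (each such $h$ is bounded below the threshold by a margin of $\eta$). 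A symmetric weighted calculation handles the validation step for hypotheses with $L_\mc{D}(h) > \eps$. Tuning $K$ and the validation-sample size and union-bounding over the $O(\log(1/\delta)/\rho)$ iterations shows that $\mc{A}'$'s output lies in $C_\mc{D}$ with probability $\ge 1 - \delta$, completing the reduction.
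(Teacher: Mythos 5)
Your proof is correct, and it uses the same two-inequality decomposition as the paper; the pigeonhole argument for $\rho(\mc{H}) \ge 1/\List(\mc{H})$ is identical. For the reverse inequality your underlying idea matches the paper's Proposition~\ref{prop:reptolist}: at most $\lfloor 1/\rho\rfloor$ (in your framing, at most $L-1$) hypotheses can be output with probability above a threshold slightly below $\rho$, these form the data-independent list, and the amplified learner finds one of them by empirically estimating $\mc{A}$'s output frequencies and validating the loss on held-out data. Where you genuinely diverge is in the implementation and the concentration tool. The paper's $\mc{A}'$ is one-shot: it runs $\mc{A}$ on $T$ disjoint batches, outputs any hypothesis whose empirical output frequency is at least $\rho-\tfrac{\alpha}{2}$ and whose hold-out loss is at most $\tfrac{3\eps}{2}$, and controls all frequencies $\hat{P}(h)$ simultaneously via uniform convergence for the VC-dimension-one family of singletons, the list being $\{h : P(h) > \tfrac{1}{L+1}\}$. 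You instead run a sequential propose-and-verify loop and replace uniform convergence by the weighted bound $\sum_{h} P(h)\Pr[\hat{P}(h) > \rho-\eta] \le e^{-cK\eta^2}$, which is valid because the verification chunks are independent of the proposal chunk; this is a correct and arguably more elementary way around the infinitely-many-hypotheses obstacle, and your contrapositive framing ($\rho > 1/L$ forces $\List(\mc{H}) \le L-1$) gives exactly the same quantitative conclusion as the paper's direct bound, including the degenerate case $\rho(\mc{H})=0$, $\List(\mc{H})=\infty$. Two minor points to tidy up: cap the number of iterations at your $O(\log(1/\delta)/\rho)$ bound (outputting a default hypothesis on failure) so that $\mc{A}'$ consumes a sample of fixed size $n(\eps,L,\delta)$, and note that $h_\mc{D}$ passes the checks with high probability only conditioned on being proposed, which happens with probability about $\rho$ per round --- this is exactly what your iteration count already accounts for.
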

%
%
%
%
%
%
%
%

Theorem~\ref{thm:recip} also holds in the limit case that
$\rho(\mc{H}) = 0$ and $\List(\mc{H}) = \infty$.
It implies a weak form of boosting;
for example, 
if we know that $\rho(\mc{H}) \geq 0.31$,
then we can automatically deduce that $\rho(\mc{H}) \geq \tfrac{1}{3}$.
It also says that we can freely replace $\rho(\mc{H})$
with $\frac{1}{\List(\mc{H})}$ and vice versa
(as we do for the rest of this text).
The theorem, in addition, implies the following equivalence:
$$\text{$\mc{H}$ is globally stable
$\ \iff \ $      $\mc{H}$ is list replicable.}$$
It further shows that although global stability can not be boosted,
we can always find a short list of data-independent predictors
that can be outputted with high probability, thereby providing desirable ``replicability" guarantees. 

\subsection{Other dimensions}


The next topic we investigate is relations between list replicability
numbers and other known learning-theoretic measures.
The first measure we address is the VC dimension,
which is of fundamental importance in learning theory~\cite{shalev2014understanding}.
The following theorem shows that large VC dimension
implies instability.

\begin{theorem}
For every $\mc{H}$, we have
$\List(\mc{H}) \geq \VCdim(\mc{H})$.
\end{theorem}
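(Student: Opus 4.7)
Let $d = \VCdim(\mathcal{H})$, fix a shattered set $\{x_1,\dots,x_d\}\subseteq X$, and suppose towards a contradiction that $\mathcal{A}$ is an $(\eps, L)$-list replicable learner for $\mathcal{H}$ with $\eps$ small and $L < d$. I aim to locate a \emph{single} distribution whose associated list already contains at least $d$ distinct hypotheses. Take the Dirac distribution $v = \delta_{(x_1,+1)}$, which is realizable. Since the sample drawn from $v^n$ is the deterministic string $(x_1,+1)^n$, the learner induces an output distribution $\mu$, and list replicability gives a list $\mathcal{L}_v$ of size at most $L$ with $\mu(\mathcal{L}_v) \geq 1-\delta$. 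I will show $|\mathcal{L}_v| \geq d$.

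Pick $d$ distinct labelings $y^{(1)},\dots,y^{(d)} \in \{\pm 1\}^d$, all with $y^{(j)}_1 = +1$ (for instance, the all-$+1$ labeling together with the $d-1$ labelings obtained from it by flipping the $j$-th bit for $j=2,\dots,d$). For each $j$, let $\mathcal{D}_j$ be uniform on $\{(x_i, y^{(j)}_i) : i \in [d]\}$ (realizable by a hypothesis witnessing the shattering), and form the mixture
\[
\mathcal{D}_j^\alpha \;:=\; (1-\alpha)\, v \;+\; \alpha\, \mathcal{D}_j,
\]
which is realizable because $y^{(j)}_1 = +1$ agrees with $v$ on $x_1$. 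A direct loss computation shows that when $\alpha > d\eps$, any hypothesis $h$ with $L_{\mathcal{D}_j^\alpha}(h) \leq \eps$ must satisfy $h|_{\{x_1,\dots,x_d\}} = y^{(j)}$; hence every element of the list $\mathcal{L}_j$ for $\mathcal{D}_j^\alpha$ restricts to $y^{(j)}$ on the shattered set.

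The decisive step is total-variation closeness: $\TVD(\mathcal{D}_j^{\alpha,n}, v^n) \leq n\alpha$, so if we also arrange $n\alpha < \delta$, the induced output distributions agree within TV distance $\delta$. Then $\mu(\mathcal{L}_j) \geq 1-2\delta$, and combined with $\mu(\mathcal{L}_v) \geq 1-\delta$ one obtains
\[
\mu(\mathcal{L}_v \cap \mathcal{L}_j) \;\geq\; 1 - 3\delta \;>\; 0 \qquad (\delta < 1/3),
\]
so the intersection is nonempty. Any element of $\mathcal{L}_v \cap \mathcal{L}_j$ lies in $\mathcal{L}_v$ \emph{and} restricts to $y^{(j)}$ on $[d]$; distinctness of the $y^{(j)}$'s then yields $d$ distinct hypotheses in $\mathcal{L}_v$, so $L \geq d$. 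The main technical obstacle is the joint parameter calibration: one needs $\alpha$ satisfying both $\alpha > d\eps$ and $\alpha < \delta/n$ for some $\delta < 1/3$, equivalently $\eps\cdot n(\eps,L,\delta) < \delta/d$. Since the premise "$\mathcal{H}$ is list replicable with list size $L$" furnishes, for each $\eps > 0$, a learner $\mathcal{A}_\eps$ with its own sample-complexity function $n_\eps(\delta)$, we have the freedom to choose $\eps$ small enough (relative to this function, at a fixed $\delta$ slightly below $1/3$) to make the permissible interval for $\alpha$ nonempty, which closes the argument.
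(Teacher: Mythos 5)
There is a genuine gap, and it sits exactly where you flagged it: the parameter calibration is not merely delicate, it is impossible. Your argument needs an $\alpha$ with $d\eps < \alpha < \delta/n(\eps,L,\delta)$, i.e.\ $\eps\cdot n(\eps,L,\delta) < \delta/d < \tfrac13$. But an $(\eps,L)$-list replicable learner is in particular a PAC learner: with probability at least $1-\delta$ its output lies in a list of hypotheses all of population loss at most $\eps$. Hence the standard realizable-case sample complexity lower bound applies, $n(\eps,L,\delta) = \Omega\bigl((d+\log(1/\delta))/\eps\bigr)$, so the product $\eps\cdot n(\eps,L,\delta)$ is bounded below by a positive constant (in fact grows with $d$), uniformly in $\eps$. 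Choosing $\eps$ smaller does not help: $n_\eps(\delta)$ grows at least like $1/\eps$, so $\eps\, n_\eps(\delta)$ never drops below $\delta/d$. The quantifier structure gives you no freedom here --- $n$ is handed to you as a function of $\eps$ and $\delta$; you do not get to make the product small.

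The failure is conceptual, not just arithmetic. To force every member of the list for $\mc{D}_j^{\alpha}$ to restrict to $y^{(j)}$ you must put mass more than $\eps$ on each flipped point, but a learner accurate to within $\eps$ necessarily has enough samples ($n\gtrsim 1/\eps \gg 1/\alpha$) to detect such a perturbation, so $(\mc{D}_j^{\alpha})^n$ and $v^n$ are far in total variation and the output distributions need not be close. Concretely, the paper's own learner for $\{\pm\}^d$ (label each sufficiently frequent point by its observed label, $+$ elsewhere) outputs the all-$+$ hypothesis with probability $1$ under $v$, while under $\mc{D}_j^{\alpha}$ with $\alpha>d\eps$ it outputs, with high probability, a hypothesis restricting to $y^{(j)}$; so the step $\mu(\mc{L}_j)\ge 1-2\delta$ breaks down. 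This obstruction is precisely why the paper does not argue via indistinguishability: it constructs a $(d-1)$-parameter family of realizable distributions $\mc{D}_t$ and invokes the Poincar\'e--Miranda theorem to locate, for the given learner, a single hard distribution $\mc{D}_{t^*}$ on which the probabilities of outputting from each of $d$ forced, pairwise disjoint families of hypotheses are equal up to $\delta$, hence each at most $\tfrac1d + O(\delta)$; this bounds the global stability parameter by $\tfrac1d$, and $\List(\mc{H})\ge d$ then follows from $\rho(\mc{H}) = 1/\List(\mc{H})$ (Theorem~\ref{thm:recip}). To salvage your route you would need some such mechanism for learners that \emph{can} distinguish the nearby distributions; the fixed-point argument is doing real work that a total-variation argument cannot replace.
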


This follows from the following sharp result. 

\begin{lemma}
\label{lem:VC}
For every $d \in \N$, we have $\List(\{\pm\}^d) = d$. 
\end{lemma}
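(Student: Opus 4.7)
I plan to prove $\List(\{\pm\}^d) \leq d$ and $\List(\{\pm\}^d) \geq d$ separately.

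\textbf{Upper bound.}
By Theorem~\ref{thm:recip}, it suffices to construct a learner with global stability parameter $\rho \geq 1/d$ for every $\eps>0$. The learner picks $k\in[d]$ uniformly at random as its internal randomness; then, given $S$, it computes the empirical frequencies $\hat p_i$ and empirical majority labels $\hat c_i$ of each point $x_i$, and outputs the hypothesis that assigns $\hat c_i$ to the $k$ most empirically frequent points (ties broken by index) and the default label $+$ to the remaining $d-k$ points. For a realizable $\mc{D}$, let $k^\star(\mc{D})$ denote the number of ``heavy'' points, i.e.\ those with population probability at least $\eps/d$. For $n$ polynomial in $d/\eps$, standard Chernoff bounds show that on the event $k=k^\star$ the learner outputs, with probability $1-o_n(1)$, a fixed hypothesis $h_\mc{D}^\star$ of loss $\leq\eps$: the empirical top-$k^\star$ coincides with the population top-$k^\star$ and the sample majorities recover the true labels on these heavy points, while the total mass of the defaulted light points is at most $\eps$. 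Since $\Pr[k=k^\star]=1/d$, this yields $\rho(\{\pm\}^d)\geq (1-o_n(1))/d$ and hence $\List(\{\pm\}^d)\leq d$.

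\textbf{Lower bound.}
We use a topological argument in the spirit of Theorem~\ref{thm:boostneg}. Suppose for contradiction that $\mc{A}$ is an $(\eps,L)$-list replicable learner for $\{\pm\}^d$ with $L<d$. Parametrize realizable distributions by the simplex $\Delta^{d-1}$: fix a labeling $\sigma\in\{\pm\}^d$ and, for each $p\in\Delta^{d-1}$, let $\mc{D}_p$ denote the distribution with marginal $p$ on $[d]$ and labeling $\sigma$. The output distribution $\mu_p$ of $\mc{A}$ on $\mc{D}_p^n$ varies continuously in $p$ (in total variation), because the sample distribution $\mc{D}_p^n$ does. By a careful choice of $\sigma$, one exhibits $d$ pairwise distinct hypotheses $h^1,\ldots,h^d\in\{\pm\}^d$ such that, at each vertex $e_i$ of the simplex, $\mu_{e_i}$ is concentrated on $h^i$ with probability at least $1-\delta$.

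One then applies the Poincar\'e--Miranda theorem to a continuous map $F:\Delta^{d-1}\to\R^{d-1}$ whose coordinates measure the gaps $\mu_p(h^i)-1/d$, designed so that the corner conditions translate into opposite-sign boundary behavior on opposite faces of the simplex. A zero of $F$ produces $p^*\in\Delta^{d-1}$ with $\mu_{p^*}(h^i)\geq 1/d$ for every $i\in[d]$. Since the $h^i$ are pairwise distinct, no list of size $L<d$ can contain all of them, so $\mu_{p^*}$ cannot concentrate on any such list, contradicting $L$-list replicability.

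The main technical obstacle is designing the labeling $\sigma$ and the witness hypotheses $h^i$ so that (i) each $h^i$ is a valid list member at $\mc{D}_{p^*}$ (loss at most $\eps$), and (ii) the induced map $F$ genuinely satisfies the opposite-sign boundary condition required by Poincar\'e--Miranda. Both requirements reduce to careful bookkeeping of how $\mc{A}$ must behave at the corners of the simplex, and mirror the use of Poincar\'e--Miranda in the proof of Theorem~\ref{thm:boostneg}, of which this lemma forms the base case.
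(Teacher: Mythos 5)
Your algorithm (draw $k\in[d]$ uniformly, label the empirical top-$k$ points) is close in spirit to the paper's learner, but the central step of your analysis fails. You claim that on the event $k=k^\star$, where $k^\star$ is the number of points of mass at least $\eps/d$, the empirical top-$k^\star$ coincides with the population top-$k^\star$ with probability $1-o_n(1)$. That requires a gap between the $k^\star$-th and $(k^\star+1)$-st largest probabilities that is large compared to the sampling noise $\approx n^{-1/2}$, and no such gap is guaranteed: in the definition of global stability the sample size $n$ is fixed \emph{before} the adversary picks $\mc{D}$, so $\mc{D}$ may place two points at masses $\eps/d$ and $\eps/d-n^{-2}$ (with label $-$), in which case, conditioned on $k=k^\star$, the output splits roughly evenly between two distinct hypotheses and the claimed contribution $(1-o_n(1))/d$ to stability disappears; points of mass $\Theta(1/n)$ that may go unseen cause the same problem at $k=d$. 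Repairing this forces you to analyze a ``gap level'' instead of the level $\eps/d$, and proving that an accurate gap level always exists is a genuine additional argument; the paper sidesteps it by drawing a \emph{random cutoff} $\kappa$ uniformly from $[0,\eps/(2d)]$ on empirical frequencies and showing directly that with high probability the output is one of the $d$ prefix hypotheses $h_1,\ldots,h_d$, i.e.\ it proves $(\eps,d)$-list replicability directly rather than global stability $1/d$ plus Theorem~\ref{thm:recip}. Your route through Theorem~\ref{thm:recip} is legitimate in principle, but the stability claim as written is false.

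\textbf{Lower bound.} Here the proposal is a sketch whose stated anchor is wrong: there is no reason that at a vertex $e_i$ (a point mass on $(x_i,\sigma_i)$) an $(\eps,L)$-list replicable learner concentrates on a \emph{single} hypothesis $h^i$ with probability $1-\delta$; accuracy only pins down the value at $x_i$, and list replicability confines the output to a list, not to one function, so the learner may split among many accurate hypotheses at the corners. Moreover, Poincar\'e--Miranda requires sign conditions on entire opposite faces, not just at corners, and zeroing your $(d-1)$-coordinate map would only give $\mu_{p^*}(h^i)=1/d$ for the tracked hypotheses, with nothing forcing mass on a $d$-th one. The paper's mechanism supplies exactly the ingredients your sketch defers: it first reduces to learners with small \emph{empirical} error (Lemma~\ref{empiricalwlog}); it partitions the \emph{entire} output space $\{\pm\}^X$ into $d$ classes $\mc{F}_0,\ldots,\mc{F}_{d-1}$ by a first-disagreement pattern on a shattered set (Lemma~\ref{vclowerbound}); and it applies Poincar\'e--Miranda on the cube $[-1,1]^{d-1}$ to $g_j(t)=\Pr[\mc{A}(S)\in\mc{F}_0]-\Pr[\mc{A}(S)\in\mc{F}_j]+\delta t_j$, where crucially the \emph{label} placed at $x_j$ flips with the sign of $t_j$, so that on each face the empirical-learner property rules out an entire class and yields the required sign. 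At the zero, all $d$ class probabilities are within $O(\delta)$ of each other, hence each is about $1/d$; this bounds $\rho$ from above (equivalently, some class disjoint from any candidate list of size $d-1$ retains mass about $1/d>\delta$), and Theorem~\ref{thm:recip} gives $\List(\{\pm\}^d)\geq d$. Tracking class probabilities of a partition rather than individual hypotheses, varying labels and not only the marginal, and verifying face (not corner) conditions via the empirical-error reduction are precisely the missing content of your ``careful choice of $\sigma$'', and the corner concentration you posit cannot in general be arranged.
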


The lemma, together with Theorem~\ref{thm:recip}, immediately implies Theorem~\ref{thm:boostneg} above.
The proof of the lemma consists of two parts.
One part is the lower bound $\List(\{\pm\}^d) \geq d$,
which is proved via the topological mechanism mentioned above.
The second part is the upper bound $\List(\{\pm\}^d) \leq d$,
which is an algorithmic result. 
The main difficulty in the algorithm is guaranteeing stability:
{\em we potentially have $2^d$ functions,
so how can we make sure that almost certainly
only $d$ functions are being outputted?}
The proof appears in Section~\ref{sec:VCbounds}.

A second important dimension in the theory of learning
is the Littlestone dimension, which is known to be deeply
linked to {\em privacy} and to {\em stability}.
The authors of~\cite{alon2022private} showed that the Littlestone dimension
provides a lower bound on the sample complexity 
of differentially private PAC learning.
The authors of~\cite{bun2020equivalence} complemented the picture by showing
that the Littlestone dimension also implies an upper bound
on the sample complexity of differentially private PAC learning.
The two  main results of the latter work state that
(i) finite Littlestone dimension yields globally stable algorithms, and
(ii) globally stable algorithms lead to differentially private PAC learning algorithms.
Altogether, the results in these two papers lead to the following equivalence:
\begin{equation}
\Ldim(\mc{H}) < \infty \quad \iff \quad \List(\mc{H}) < \infty \label{eqn:little}
\end{equation}
that holds for every class $\mc{H}$.
A concrete bound that was proved in~\cite{bun2020equivalence} says that for every class~$\mc{H}$,
$$\List(\mc{H})\leq 2^{2^{O(\Ldim(\mc{H}))}}.$$
The more recent work~\cite{ghazi2021sample} developed more efficient differentially private algorithms.
Ideas from this work can be used to improve the bound:
for every $\mc{H}$, it holds that
$$\List(\mc{H})\leq 2^{O(\Ldim(\mc{H})^2)}.$$
We do not know if this last bound is sharp. But we do prove that
a bound in the other direction does not hold.

\begin{theorem}
\label{thm:Little}
For every integer $d \geq  2$, there is a finite class $\mc{H}$ with $\Ldim(\mc{H})=d$ and $\List(\mc{H})=2$.
\end{theorem}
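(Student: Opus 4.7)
The plan is to take $\mc{H}_d$ to be the class of threshold functions on $X_d=\{1,\ldots,2^d\}$, namely $\mc{H}_d=\{h_t:0\le t\le 2^d\}$ with $h_t(x)=+1$ iff $x\le t$. Then $\Ldim(\mc{H}_d)=d$: a balanced bisection mistake tree (the root queries $x=2^{d-1}$, and the two consistent subclasses are themselves threshold classes on which one recurses) realizes depth $d$, and the matching upper bound follows from $\Ldim(\mc{H}_d)\le\lfloor\log_2|\mc{H}_d|\rfloor=d$. What remains is to verify that $\List(\mc{H}_d)=2$.

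For $\List(\mc{H}_d)\ge 2$, suppose toward contradiction that $\mc{A}$ is a $1$-list replicable learner for $\mc{H}_d$ at accuracy $\eps$ and reliability $\delta<1/2$, and let $h_\mc{D}\in\mc{H}_d$ denote the unique hypothesis that $\mc{A}$ outputs with probability at least $1-\delta$ on $\mc{D}^n$. For each $1\le i<j\le 2^d$ consider the family
\[
\mc{D}^{i,j}_\alpha \;=\; \alpha\,\delta_{(i,+)} + (1-\alpha)\,\delta_{(j,-)},\qquad \alpha\in[0,1],
\]
which is realizable by $h_k$ exactly for $i\le k<j$. Since $\mathrm{TV}\bigl((\mc{D}^{i,j}_\alpha)^n,(\mc{D}^{i,j}_{\alpha'})^n\bigr)\le n|\alpha-\alpha'|$, a short coupling argument forces $h_{\mc{D}^{i,j}_\alpha}=h_{\mc{D}^{i,j}_{\alpha'}}$ whenever $|\alpha-\alpha'|<(1-2\delta)/n$; hence $\alpha\mapsto h_{\mc{D}^{i,j}_\alpha}$ is locally constant on the connected set $[0,1]$, therefore constant, equal to some $h_{k_{i,j}}$ with $k_{i,j}\in[i,j-1]$. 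The endpoint $\mc{D}^{i,j}_0=\delta_{(j,-)}$ depends only on $j$, so $k_{i,j}$ depends only on $j$; similarly $\mc{D}^{i,j}_1=\delta_{(i,+)}$ shows it depends only on $i$. Thus $k_{i,j}$ is a single integer $k$ satisfying $i\le k<j$ for every pair, but $(i,j)=(1,2)$ forces $k=1$ while $(i,j)=(2^d-1,2^d)$ forces $k=2^d-1$, a contradiction whenever $d\ge 2$.

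For $\List(\mc{H}_d)\le 2$, I would design the learner as follows. Set $\eta=\eps/3$. On sample $S$ of size $n$, compute $\hat{p}=|\{(x,y)\in S:y=+\}|/n$, set $\bar{p}=\eta\bigl(\lfloor\hat{p}/\eta\rfloor-1\bigr)$, and output $h_{\hat{t}}$ where $\hat{t}$ is the $\lceil\bar{p}n\rceil$-th smallest $x$-value in $S$ (returning $h_0$ if $\bar{p}\le 0$). For any realizable $\mc{D}$ with threshold $t^*$ and positive mass $p$, Hoeffding yields $|\hat{p}-p|\le\eta/4$ with probability $\ge 1-\delta/2$ once $n\gtrsim\log(1/\delta)/\eta^2$, so $\bar{p}$ takes at most two $\mc{D}$-dependent values, each in $[p-3\eta,\,p-\eta/2]$. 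For each such value, DKW-type concentration combined with a union bound over the $\le 2^d$ candidate support points of $\mc{D}$ forces $\hat{t}=F^{-1}(\bar{p})\le t^*$ with probability $\ge 1-\delta/4$ (where $F$ is the marginal CDF of $\mc{D}$), and the loss $L_\mc{D}(h_{\hat{t}})\le p-\bar{p}\le 3\eta=\eps$. Hence $\mc{A}(S)$ lies in a $\mc{D}$-dependent list of at most two hypotheses with probability $\ge 1-\delta$.

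The main obstacle is the last quantile-concentration step: if $\bar{p}$ were to coincide with a jump height of the discrete CDF $F$, the order statistic $\hat{t}$ could a priori fluctuate between two support points. The safety margin built into $\bar{p}$ (the subtracted $\eta$), together with a sample size large enough that the DKW deviation is much smaller than $\eta$, guarantees that $\bar{p}$ lies strictly between consecutive jump heights of $F$ with high probability, pinning $\hat{t}$ to the single intended support point and yielding a clean list of size at most two.
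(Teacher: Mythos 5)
Your choice of class, the computation $\Ldim(\mc{H}_d)=d$, and the lower bound $\List(\mc{H}_d)\ge 2$ are all fine; the lower bound is essentially the paper's ``intermediate value theorem'' argument made explicit via a total-variation/local-constancy step (one small repair: you assert $h_\mc{D}\in\mc{H}_d$, but the learner may be improper; the same chaining of endpoints still yields a single hypothesis $g$ serving all pairs, and then the pair $(1,2)$ forces $g(2)=-$ while $(2,3)$ forces $g(2)=+$, which is the desired contradiction without any properness assumption).

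The genuine gap is in the upper bound $\List(\mc{H}_d)\le 2$, which is the heart of the theorem. Your final claim --- that the subtracted safety margin guarantees $\bar p$ lies strictly between consecutive jump heights of $F$ --- is false: the margin only keeps $\bar p$ below $p=F(t^*)$; it does nothing to separate $\bar p$ from the other jump heights $F(x)$ with $x<t^*$, and the adversary chooses $\mc{D}$ \emph{after} your deterministic grid $\eta\Z$ is fixed. Concretely, with $\eta=\eps/3$ and an integer $m_0\ge 3$ with $\eta m_0<1$, take the realizable distribution with marginal $\Pr[x=1]=\eta(m_0-2)$, $\Pr[x=2]=\Pr[x=3]=\eta$, $\Pr[x=4]=1-\eta m_0$, labeled by the threshold $t^*=3$. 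Then $p=\eta m_0$ sits exactly on your grid, so $\bar p$ equals $\eta(m_0-1)=F(2)$ or $\eta(m_0-2)=F(1)$, each with probability tending to $1/2$; and in either case the level $\bar p$ coincides exactly with a jump height of $F$, so the $\lceil\bar p n\rceil$-th order statistic is essentially a fair coin between two adjacent support points (the relevant multinomial counts fluctuate by $\Theta(\sqrt n)$ around the cutoff, and no choice of $n$ removes this). A routine CLT computation shows that each of $h_1,h_2,h_3$ is output with probability bounded away from $0$ as $n\to\infty$, so no list of size $2$ captures mass $1-\delta$ for small $\delta$, and your DKW step ``$\hat t=F^{-1}(\bar p)$ with probability $\ge 1-\delta/4$'' fails precisely at these resonant levels. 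The paper's proof avoids this by a different rule: output $\tau_{\hat i}$ for the smallest $i$ with $L_S(\tau_i)<\sigma_i$, where the tolerances $\sigma_i=\eps/\bigl(2(t-i+1)\bigr)$ increase strictly in $i$; the monotonicity $L_\mc{D}(\tau_1)\ge\cdots\ge L_\mc{D}(\tau_{i_0})=0$ plus a case analysis on how $L_\mc{D}(\tau_{i_*-1})$ compares with $\sigma_{i_*-1}$ pins the output, for every realizable $\mc{D}$, to a distribution-dependent list of at most two consecutive thresholds. Alternatively, your scheme could likely be rescued by randomizing the offset of the grid (in the spirit of the random cutoff $\kappa$ in the paper's learner for $\{\pm\}^d$), but as written, with a fixed grid, the argument does not go through.
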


The class in the theorem above can be taken to be
the collection of thresholds over $2^d$ points.
It was not a priori clear to us that the list size of this class is two.
The algorithm and its analysis rely on the special 
properties of {a linear order}.
We do not know if a similar bound holds, for example,
for halfspaces (LTFs) in the plane.

A third quantity we compare list replicability numbers to is {\em hollow star} numbers.\footnote{The hollow star number $\HSdim(\mc{H})$ of $\mc{H}$ is the maximum size of a subset
$X'$ of $X$ so that the projection $\mc{H}|_{X'}$ of $\mc{H}$ to $X'$
contains a hollow star
(that is, there is a function $f \not \in \mc{H}|_{X'}$
so that all the $|X'|$ functions of Hamming distance one from $f$
are in $\mc{H}|_{X'}$).} 
Hollow star numbers are known to be related to proper learning~\cite{bousquet2020proper};
namely, to learning algorithm whose output is restricted to be in $\mc{H}$.
We define $\List_p(\mc{H})$ in the same way as $\List(\mc{H})$
except that we quantify only over proper algorithms.

\begin{theorem}
For every class $\mc{H}$, one has
$$\List_p(\mc{H}) \geq \HSdim(\mc{H})-1.$$
\end{theorem}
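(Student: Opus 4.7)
The plan is to prove $\List_p(\mc{H}) \geq k - 1$ (where $k := \HSdim(\mc{H})$) by contradiction, using a topological degree argument in the spirit of the earlier lower-bound proofs in the paper. Suppose $\mc{A}$ is a proper $(\eps, L)$-list replicable learner for $\mc{H}$ with $L \leq k - 2$, and fix a hollow-star witness: a set $X' = \{x_1, \ldots, x_k\} \sub X$ together with $f : X' \to \{\pm\}$ satisfying $f|_{X'} \notin \mc{H}|_{X'}$ while each neighbor $f^{(i)} := f \oplus \mathbf{1}_{x_i}$ lies in $\mc{H}|_{X'}$. For each $i$ pick $g_i \in \mc{H}$ extending $f^{(i)}$, and take $\eps, \delta$ sufficiently small (say smaller than $1/(100 k)$).

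For $p$ in the $(k-1)$-dimensional simplex $\Delta_k$, let $\mc{D}_p$ be the distribution on $X' \times \{\pm\}$ placing mass $p_i$ on $(x_i, f(x_i))$; note that $\mc{D}_p$ is realizable (by $g_j$) iff some $p_j = 0$, i.e., iff $p \in \partial \Delta_k$. For each $j \in [k]$, define
\[
r_j(p) \;:=\; \Pr_{S \sim \mc{D}_p^n}\bigl[\mc{A}(S)|_{X'} = f^{(j)}\bigr],
\]
which is a polynomial in $p$, hence continuous on $\Delta_k$. For $p$ in the relative interior of the facet $F_j := \{p \in \partial \Delta_k : p_j = 0\}$, every list entry must lie in $\mc{H}|_{X'}$ and agree with $f$ on $X' \setminus \{x_j\}$; since $f|_{X'} \notin \mc{H}|_{X'}$, the unique such element is $f^{(j)}$, forcing $r_j(p) \geq 1 - \delta$ and $r_l(p) \leq \delta$ for all $l \neq j$. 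Thus the restricted map $r : \partial \Delta_k \to \Delta_k$ sends each facet $F_j$ to a $\delta$-neighborhood of the opposite vertex $e_j$.

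I would then extract a topological contradiction. On the one hand, list replicability with list size $L \leq k - 2$ implies that for each $p \in \partial \Delta_k$ the output distribution is concentrated (up to mass $\delta$) on at most $L$ coordinates, so the image of $r|_{\partial \Delta_k}$ lies essentially in the $(L-1)$-skeleton of $\Delta_k$, a subcomplex of dimension at most $k - 3$. On the other hand, the boundary condition ``each facet of $\partial\Delta_k$ is sent to its opposite vertex'' is the combinatorial signature of the simplicial duality map on $\partial \Delta_k \cong S^{k-2}$, which is homotopic to the antipodal map and hence has topological degree $\pm 1$. But any continuous map from $S^{k-2}$ that factors through a $(k-3)$-dimensional subcomplex induces the zero map on top cohomology and therefore has degree zero, a contradiction. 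The base case $k = 3$ is particularly transparent: the conclusion would force $\partial \Delta_3 \cong S^1$ to map continuously into three discrete vertices while hitting each of them, which is impossible.

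The main obstacle I anticipate is twofold. First, the proper learner might output hypotheses whose restriction to $X'$ lies in $\mc{H}|_{X'}$ but outside $\{f^{(i)}\}_i$ (e.g., functions that differ from $f$ at several points of $X'$), in which case $\sum_j r_j(p)$ need not equal $1$. This is addressed by replacing $r_j$ with a ``nearest neighbor'' version $\tilde r_j(p)$ that records the probability that $f^{(j)}$ is the Hamming-closest element of $\{f^{(i)}\}_i$ to $\mc{A}(S)|_{X'}$ (with any fixed tie-breaking rule); the boundary analysis is preserved because a list entry's $X'$-restriction differs from $f$ only on the zero coordinates of $p$, so its nearest hollow-star neighbor must lie among $\{f^{(j)} : p_j = 0\}$. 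Second, the $\delta$-approximation in the boundary condition must be handled rigorously, which one does by a small homotopy to the strict simplicial duality map (valid for $\delta$ sufficiently small), preserving non-zero degree. With these fixes in place the topological degree argument yields $L \geq k - 1$, contradicting the assumption $L \leq k - 2$.
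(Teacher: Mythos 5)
Your overall strategy---working directly on the simplex of hollow-star distributions and playing a nonzero-degree boundary condition against a degree-zero ``image in a low skeleton'' condition---is viable, and it is genuinely different from the paper's route, which builds a combinatorial instability witness from the hollow star, feeds it into the Poincar\'{e}--Miranda-based proposition to get $\rho_{\mc{H}}(\mc{H})\le \tfrac{1}{s-1}$, and then translates this into a list lower bound via $\rho\cdot\List=1$. However, as written your key topological step has a genuine gap. The boundary condition you assert---$r_j(p)\ge 1-\delta$ on the whole relative interior of $F_j$, i.e.\ each facet is sent into a $\delta$-neighborhood of its opposite vertex---is false: list members are only guaranteed population loss at most $\eps$, so they may disagree with $f$ on any set of coordinates of total mass at most $\eps$; at points of $F_j$ with some $p_i\in(0,\eps)$ the output can concentrate on $f^{(i)}$ rather than $f^{(j)}$. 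Worse, no continuous map can send each \emph{closed} facet near its opposite vertex (a point of $F_j\cap F_l$ would have to be near both $e_j$ and $e_l$), so the ``strict simplicial duality map'' you propose to homotope to does not exist, and the degree-$\pm 1$ claim is unsubstantiated exactly in the region (near lower-dimensional faces and small positive coordinates) where your boundary analysis breaks down. A second, smaller issue: $r$ takes values in the full simplex $\Delta_k$, which is contractible, so ``degree'' is undefined until you compose with a retraction of $\Delta_k$ minus an interior point onto $\partial\Delta_k$, and you must check the image avoids that point.

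Both gaps are repairable, and the repair clarifies which condition you should extract. What the learner actually gives, for \emph{every} $p\in\partial\Delta_k$, is that with probability $\ge 1-\delta$ the output lies in the list and its restriction to $X'$ disagrees with $f$ only on coordinates of total mass $\le\eps$ (properness is what rules out the restriction being $f$ itself, which your nearest-neighbour map silently needs); with your nearest-neighbour bookkeeping this yields $\sum_j \tilde r_j(p)\,p_j\le \eps+\delta$ for all $p\in\partial\Delta_k$, rather than a per-facet vertex condition. Since every point of the segment from the barycenter $b$ to $p$ has inner product at least $\tfrac1k$ with $p$, taking $\eps+\delta<\tfrac1k$ shows the image avoids $b$ and the radially retracted map $g:\partial\Delta_k\to\partial\Delta_k$ is fixed-point free, hence homotopic to the antipodal map and of degree $\pm1$; meanwhile your skeleton argument, applied to $g$ with $L\le k-2$ and $\delta$ small enough that the neighborhood of the $(k-3)$-skeleton retracts onto it, gives degree $0$---the desired contradiction (the case $\HSdim(\mc{H})\le 2$ being trivial). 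With that correction your argument goes through and is a nice direct alternative to the paper's witness-plus-Poincar\'{e}--Miranda proof.
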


The theorem is sharp in the sense that for every integer $s>1$, 
the class of~$s$ singletons has hollow star number $s$
and its proper list replicability number is exactly~$s-1$.
This implies a separation between proper and non-proper list numbers, because the (non-proper) list replicability number
of the class of singletons is two.
The proof of the theorem uses, again, the topological mechanism
mentioned above; see Section~\ref{sec:HS}.

\subsection{A geometric picture}
\label{sec:geo}
There is a geometric way to picture global stability.
Let~$\mc{H}$ be a globally stable class.
Denote by $\Delta = \Delta_\mc{H}$ the collection of $\mc{H}$-realizable
distributions. We think of $\Delta$ as a metric space equipped with the total-variation distance (denoted by $\mathtt{TV}$).
We find it helpful to think of $\Delta$ geometrically, as a subset of the space $\R^{X \times \{\pm\}}$ whose points
are distributions that are realizable by $\mc{H}$.

We argue below that global stability means that we can color the space 
$\Delta$ in a suitable manner so that locally there are only few colors. 
This perspective might be useful for understanding the topological mechanism for proving instability that is described in Section~\ref{sec:instab}.

Let $L=\List(\mc{H})$, and 
let $\mc{A}$ be a learning rule witnessing it
for some $\epsilon >0$.
We use~$\mc{A}$ to color $\Delta$ as follows.
Pick $n$ so that
\[\Pr_{S\sim \mc{D}^n}[\mc{A}(S)\in \mc{L}] \geq 1- \frac{1}{3L}, \]
where $\mc{L}$ is the list guaranteed by list replicability (so that $\lvert \mc{L}\rvert = L$ and $L_\mc{D}(h)\leq \eps$ for all $h\in\mc{L}$).
The color of a distribution $\mc{D}\in \Delta$ is a hypothesis 
that is most frequently outputted by $\mc{A}$ when applied on $S\sim \mc{D}^n$. That is,
\begin{align*}
h_{\mc{D}} = \mathop{\argmax}_h \Pr_{S\sim \mc{D}^n}[\mc{A}(S) = h], 
\end{align*}
where ties are broken arbitrarily.
The coloring $\mc{D}\mapsto h_\mc{D}$ satisfies the following:
\begin{enumerate}
\item $L_\mc{D}(h_{\mc{D}})\leq \epsilon$ for every $\mc{D}\in \Delta$.
\item Every $\mc{D}\in \Delta$ has a small neighborhood with at most $L$ colors:
for $\delta = \frac{1}{3nL}$, we have
for all $\mc{D} \in \Delta$, that
\[\bigl\lvert\bigl\{h_{\mc{D}'} : \mathtt{TV}(\mc{D},\mc{D}')\leq \delta \bigr\}\bigr\rvert\leq L.\]
\end{enumerate}
Item~1 holds by the choice of $\mc{A}$. 
Item~2 holds because if
$\mathtt{TV}(\mc{D}, \mc{D}') \le \delta$, then
\[\mathtt{TV}(\mc{D}^n, (\mc{D}')^n) < 
\frac{1}{3L},\] which implies that
$\Pr_{S\sim \mc{D'}^n}[A(S) \notin \mc{L}] <\frac{2}{3L}$,
and so $h_{\mc{D}'}\in \mc{L}$.

We finish this section by remarking that for finite classes there is a correspondence between list replicable learners and colorings of $\Delta$.
\begin{theorem}
The following statements are equivalent for a finite class $\mc{H}$:
\begin{enumerate}
\item $\List(\mc{H})\leq L$.
\item For every $\eps>0$, there exist $\delta>0$ and a coloring $\mc{D}\mapsto h_\mc{D}$ of $\Delta(\mc{H})$
such that~$L_\mc{D}(h_{\mc{D}})\leq \epsilon$ and
$\lvert\{h_{\mc{D}'} : \mathtt{TV}(\mc{D},\mc{D}')\leq \delta \}\rvert\leq L$ for every $\mc{D}\in \Delta$.
\end{enumerate}
\end{theorem}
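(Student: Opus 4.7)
The direction $(1)\Rightarrow(2)$ is essentially carried out in the paragraph preceding the theorem: given an $(\eps,L)$-list replicable learner $\mc{A}$, the coloring $h_\mc{D}:=\argmax_h \Pr_{S\sim\mc{D}^n}[\mc{A}(S)=h]$ satisfies the two displayed properties there, which is precisely (2). My plan therefore focuses on $(2)\Rightarrow(1)$: from a coloring I construct a deterministic $(\eps,L)$-list replicable learner.

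Since $\mc{H}$ is finite, I first reduce to finite $X$. The relation $x\sim x'$ iff $h(x)=h(x')$ for every $h\in\mc{H}$ has at most $2^{|\mc{H}|}$ classes; fixing a representative map $\pi_\text{rep}:X\to X$ sending each point to a chosen representative of its class, every $h\in\mc{H}$ satisfies $h=h\circ \pi_\text{rep}$, and the quantities $L_\mc{D}(h)$ and $\mathtt{TV}$-distances that the argument will use depend only on the representative-supported pushforward $\mc{D}_\text{rep}$ of $\mc{D}$. Replacing each color $h_{\mc{D}'}$ in the given coloring by $h_{\mc{D}'}\circ \pi_\text{rep}$ preserves property (2) while making all outputs factor through representatives, so the whole problem reduces to the finite set $X_0:=\pi_\text{rep}(X)$, on which $\Delta=\Delta(\mc{H})$ is a finite union of simplices and empirical distributions concentrate in $\mathtt{TV}$.

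Now fix a target accuracy $\eps>0$. Apply (2) with parameter $\eps/2$ to obtain $\delta>0$ (shrinking $\delta$ if necessary, I may assume $\delta\leq \eps/2$) and a coloring $\mc{D}\mapsto h_\mc{D}$ as above. The learner is the deterministic algorithm that on input $S$ forms the representative-supported empirical distribution $\wh{\mc{D}}_S$ and outputs $h_{\wh{\mc{D}}_S}$; since $\wh{\mc{D}}_S$ is supported on the graph of any $h^*\in\mc{H}$ realizing $\mc{D}$, it lies in $\Delta$ and the coloring is defined on it. Given a target failure probability $\delta'>0$, I choose $n=n(\eps,L,\delta')$ large enough that by Chernoff and a union bound over the finite set $X_0\times\{\pm\}$ one has $\Pr_{S\sim\mc{D}^n}[\mathtt{TV}(\wh{\mc{D}}_S,\mc{D}_\text{rep})>\delta]\leq \delta'$ uniformly over $\mc{D}\in\Delta$. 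On the good event the neighborhood property places $h_{\wh{\mc{D}}_S}$ in the data-independent list $\mc{L}_\mc{D}:=\{h_{\mc{D}'}:\mathtt{TV}(\mc{D}_\text{rep},\mc{D}')\leq \delta\}$, of size at most $L$; and each $h\in\mc{L}_\mc{D}$ is witnessed by some $\mc{D}'$ with $L_{\mc{D}'}(h)\leq \eps/2$, so $L_\mc{D}(h)=L_{\mc{D}_\text{rep}}(h)\leq L_{\mc{D}'}(h)+\mathtt{TV}(\mc{D}_\text{rep},\mc{D}')\leq \eps$. This yields $(\eps,L)$-list replicability and hence $\List(\mc{H})\leq L$.

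The only genuine obstacle I anticipate is that $X$ is a priori merely countable, and on infinite $X$ empirical distributions do not concentrate in total variation, which would destroy the algorithm above. The $\mc{H}$-equivalence reduction in the second paragraph is precisely what uses finiteness of $\mc{H}$ to sidestep this; once one is on a finite ground set the rest is a routine concentration-plus-triangle-inequality translation of the two coloring properties into list-size and accuracy guarantees.
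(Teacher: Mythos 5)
Your overall route for $(2)\Rightarrow(1)$ is the same as the paper's: output the color of the empirical distribution (and for $(1)\Rightarrow(2)$ the paper indeed just points to the construction preceding the theorem). Your quotient of $X$ by the relation $x\sim x'$ iff $h(x)=h(x')$ for all $h\in\mc{H}$ is a reasonable way to make the total-variation concentration step rigorous when $X$ is infinite, a point the paper's one-line argument glosses over. So the approach is right; the issue is one intermediate claim.

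The assertion that replacing each color $h_{\mc{D}'}$ by $h_{\mc{D}'}\circ\pi_{\mathrm{rep}}$ ``preserves property (2)'' is false in general. The colors are arbitrary predictors, not elements of $\mc{H}$, so they need not respect the $\mc{H}$-equivalence classes: if $\mc{D}'$ charges a non-representative point $x$, the original color may be correct at $x$ while wrong at $\pi_{\mathrm{rep}}(x)$, so $L_{\mc{D}'}(h_{\mc{D}'}\circ\pi_{\mathrm{rep}})$ can equal $1$ even though $L_{\mc{D}'}(h_{\mc{D}'})=0$; your final accuracy chain invokes exactly this via ``$L_{\mc{D}'}(h)\le\eps/2$'' for the composed $h$. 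The repair is local: keep the original colors $g_{\mc{D}'}:=h_{\mc{D}'}$ in the loss bookkeeping. Since $\mc{D}_{\mathrm{rep}}$ is the pushforward of $\mc{D}$ under $(x,y)\mapsto(\pi_{\mathrm{rep}}(x),y)$, every predictor $g$ satisfies $L_{\mc{D}}(g\circ\pi_{\mathrm{rep}})=L_{\mc{D}_{\mathrm{rep}}}(g)$, so for each witness $\mc{D}'$ with $\mathtt{TV}(\mc{D}_{\mathrm{rep}},\mc{D}')\le\delta$ one gets $L_{\mc{D}}(g_{\mc{D}'}\circ\pi_{\mathrm{rep}})=L_{\mc{D}_{\mathrm{rep}}}(g_{\mc{D}'})\le L_{\mc{D}'}(g_{\mc{D}'})+\mathtt{TV}(\mc{D}_{\mathrm{rep}},\mc{D}')\le\eps/2+\delta\le\eps$; and the list-size bound only needs the neighborhood property of the \emph{original} coloring at $\mc{D}_{\mathrm{rep}}\in\Delta$, because composing with $\pi_{\mathrm{rep}}$ can only merge colors while the output $g_{\wh{\mc{D}}_S}\circ\pi_{\mathrm{rep}}$ still lands in the composed list on the good event. (Alternatively, restrict the witnesses to distributions supported on representatives, where $\pi_{\mathrm{rep}}$ acts as the identity.) With this adjustment your argument goes through.
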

The implication $1\implies 2$ was explained above. 
For the direction $2\implies 1$, because $\mc{H}$ is finite,
for a sufficiently large sample size $n$, the empirical distribution $\hat{\mc{D}}$ induced by a sample $S\sim \mc{D}^n$ is w.h.p.\ close to $\mc{D}$ in total-variation. Thus, the learning rule that outputs the hypothesis $h=h_{\hat{\mc{D}}}$ is a list replicable learner for $\mc{H}$.

\subsection{Future directions}
There are many questions that can be addressed in future works. 
One question is about agnostic learning.
Let $\mc{H}$ be a globally stable class. 
Roughly speaking, this means that there is an algorithm that is stable under 
{\em realizable} distributions.
{\em Is $\mc{H}$ globally stable in the agnostic setting?
I.e., roughly speaking, is it true that $\mc{H}$ is stable under general distributions?}

A second direction of research is obtaining stronger algorithmic results.
We highlight specific algorithmic questions for further study.
Let $\mc{H}$ be a finite class of size~$m$.
It is trivially true that $\List(\mc{H}) \leq m$.
{\em How much can we improve on this trivial bound?
Is it true, e.g., that $\List(\mc{H})\leq C\log m$?}
In fact,
{\em is it true that $\List(\mc{H})\leq  \VCdim(\mc{H}) + 1$?}
The answers to these questions can potentially lead to new ways
for producing stable algorithms.

We mention that for infinite classes the answer to the last question is no. 
Take, for example, a class $\mc{H}$ with an infinite Littlestone dimension and a finite VC dimension.
The statement in~\eqref{eqn:little} implies that $\List(\mc{H}) = \infty$,
although $\VCdim(\mc{H}) < \infty$. 

A different direction for future study is adding to the global stability requirement
a metric ingredient.
Assume that there is some measure of distance $\mathtt{dist}$ between predictors in $\{\pm 1\}^X$. Instead of asking that typically $\mc{A}(S) = \mc{A}(S')$,
we can ask that typically the distance $\mathtt{dist}(\mc{A}(S),\mc{A}(S'))$ is small.
This corresponds to scenarios where we do not want full stability,
but only approximate stability.



\section{Instability mechanism}
\label{sec:instab}

In this section we develop a mechanism for proving
instability results; that is, for bounding from above
the global stability parameter of a concept class,
or bounding from below the list replicability number.

\subsection{Empirical learners}

The first step is noticing that globally stable learners 
have low empirical error (without loss of generality). 
We say that algorithm $\mc{A}$ is an $\eps$-empirical learner for $\mc{H}$ 
if there is $n_0$ so that  
for all $n>n_0$ and for all $S \in (X\times\{\pm\})^n$ that is consistent with $\mc{H}$
and randomness $r$,
we have
$L_S(\mc{A}(S,r)) \leq \eps$.
 
 \begin{lemma}\label{empiricalwlog}
If there is an $(\eps,\rho)$-globally stable learner for $\mc{H}$
then for every $\delta > 0$, there is an
$(\eps,\rho-\delta)$-globally stable learner for $\mc{H}$
that is also an $(\eps+\delta)$-empirical learner. 
 \end{lemma}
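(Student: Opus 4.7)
The plan is to modify the given $(\eps,\rho)$-globally stable learner $\mc{A}$ by a straightforward ``test-and-replace'' construction. Let $n_0$ be the sample size promised for $\mc{A}$, fix $\delta > 0$, and set $n := \max\bigl\{n_0,\,\lceil \log(1/\delta)/(2\delta^2) \rceil\bigr\}$. Given a sample $S \in (X \times \{\pm\})^n$, let $S_1$ denote the first $n_0$ points of $S$, run $\mc{A}$ on $S_1$ to obtain a candidate $h := \mc{A}(S_1)$, and compute its empirical loss $L_S(h)$ on the entire sample. Define
\[
\mc{A}'(S) := \begin{cases} h & \text{if } L_S(h) \leq \eps + \delta, \\ h^* & \text{otherwise,} \end{cases}
\]
where $h^*$ is any hypothesis consistent with $S$ (such an $h^*$ exists whenever $S$ is realizable by $\mc{H}$; e.g., any ERM over $\mc{H}$).

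The $(\eps+\delta)$-empirical learner property is immediate from the construction: for every realizable $S$, either the test certifies $L_S(h) \leq \eps + \delta$, or the fallback satisfies $L_S(h^*) = 0 \leq \eps + \delta$. For the stability property, fix a realizable distribution $\mc{D}$ and let $h_\mc{D}$ be the hypothesis delivered by the $(\eps,\rho)$-global stability of $\mc{A}$, so that $L_\mc{D}(h_\mc{D}) < \eps$ and $\Pr_{S_1 \sim \mc{D}^{n_0}}[\mc{A}(S_1) = h_\mc{D}] \geq \rho$. A union bound on the complement gives
\[
\Pr_{S \sim \mc{D}^n}[\mc{A}'(S) = h_\mc{D}] \;\geq\; \rho \;-\; \Pr_{S \sim \mc{D}^n}\bigl[L_S(h_\mc{D}) > \eps + \delta\bigr].
\]
Crucially, $h_\mc{D}$ depends only on $\mc{D}$ and not on $S$, so $L_S(h_\mc{D})$ is the empirical mean of $n$ i.i.d.\ $\{0,1\}$-valued random variables with mean $L_\mc{D}(h_\mc{D}) \leq \eps$. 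Hoeffding's inequality therefore bounds the tail by $\exp(-2n\delta^2) \leq \delta$ for our chosen $n$, giving $\Pr[\mc{A}'(S) = h_\mc{D}] \geq \rho - \delta$. Since $L_\mc{D}(h_\mc{D}) < \eps$ carries over unchanged, $\mc{A}'$ is $(\eps,\rho-\delta)$-globally stable, as required.

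The only subtle point, and essentially the main obstacle, is that $S_1$ is a subsample of $S$, so the events $\{\mc{A}(S_1) = h_\mc{D}\}$ and $\{L_S(h_\mc{D}) > \eps + \delta\}$ are not independent. This is precisely why the argument is phrased as a union bound on the two bad events rather than as a product of the good events. What makes the argument go through cleanly is that $h_\mc{D}$ is a \emph{fixed}, data-independent hypothesis, so Hoeffding's inequality applies directly without any uniform convergence overhead over a class. If one prefers manifest independence, a mild variant splits $S$ into disjoint subsamples $S_1$ (used by $\mc{A}$) and $S_2$ (used for the test $L_{S_2}(h) \leq \eps + \delta/2$), at the cost of a small sample-size overhead that can be absorbed into the slack; the one-sample variant above is slightly tighter and suffices.
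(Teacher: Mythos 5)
Your proposal is correct and matches the paper's argument: both modify $\mc{A}$ by a test-and-replace step that keeps its output only when its empirical loss is at most $\eps+\delta$ (falling back to a low-empirical-error hypothesis otherwise), and both conclude $\rho-\delta$ stability via a union bound together with concentration of $L_S(h_\mc{D})$ around $L_\mc{D}(h_\mc{D})$ for the fixed hypothesis $h_\mc{D}$. The only cosmetic difference is that you enlarge the sample explicitly by running $\mc{A}$ on a prefix, whereas the paper absorbs this into the monotonicity of sample size noted earlier and runs $\mc{A}$ on the full sample.
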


 \begin{proof}
Let $\mc{A}$ be an $(\eps,\rho)$-globally stable learner for $\mc{H}$.
Define a new algorithm $\mc{A}'$ as follows.
For every $S$ that is consistent with $\mc{H}$ and randomness $r$,
if $L_S(\mc{A}(S,r)) \leq \eps+\delta$ then define $\mc{A}'(S,r) = \mc{A}(S,r)$,
and if $L_S(\mc{A}(S,r)) > \eps+\delta$ then define $\mc{A}'(S,r)$ 
to be any hypothesis with $L_S(h) < \eps+ \delta$.

We claim that $\mc{A}'$ is an $(\eps,\rho - \delta)$-globally stable learner for $\mc{H}$.
Let $n$ be large enough so that 
for every distribution $\mc{D}$ that is realizable by $\mc{H}$,
there exists an hypothesis $h_\mc{D}$ so that
$L_\mc{D}(h_\mc{D})\leq \eps$, so that
$\Pr_{S,r}[A(S,r) = h_\mc{D}] \geq \rho$
and so that $\Pr[|L_S(h_\mc{D})-L_\mc{D}(h_\mc{D})| \geq \delta] < \delta$.
It follows that
\begin{equation*}
\Pr[\mc{A}'(S,r) = h_\mc{D}]
\geq \rho - \delta . \qedhere
\end{equation*}
\end{proof}

\subsection{A mechanism}

{\em What are the properties make a concept class $\mc{H}$ unstable in the sense that
$\rho(\mc{H})$ is small?}
We identify the following witness of difficulty. 

 The witness applies to the following version of $\rho(\mc{H})$.
  Let $\mc{F}$ be a class. Think of $\mc{F}$ as the possible outputs of the learner.
 In the proper setting, $\mc{F} = \mc{H}$, whereas in the fully improper case $\mc{F}=\{\pm\}^X$.
The global stability parameter
of $\mc{H}$ with respect to $\mc{F}$, denoted by $\rho_\mc{F}(\mc{H})$, is defined by quantifying over learners 
with output in $\mc{F}$. 
 



 \begin{definition}[Instability witness]\label{instabilitywitness} 
 An instability witness of size $k$ for $\mc{H}$
 with respect to $\mc{F}$
 is a function $W : [k] \times \{\pm\} \to X \times \{\pm\}$.
 The witness $W$ is composed of two functions
 $W_X : [k] \times \{\pm\} \to X$
 and $W_\pm : [k] \times \{\pm\} \to \{\pm\}$.
It satisfies the following requirements:

%
 \begin{enumerate}
 \item There is $(x,y) \in X \times \{\pm\}$ so that for every $\sigma \in \{\pm 1\}^k$,
 there exists $h \in \mc{H}$ such that $h(x) = y$ and for all $j \in [k]$, it holds that
$W(j,\sigma_j)\neq (x,y)$ and $h(W_X(j,\sigma_j)) = W_\pm(j,\sigma_j)$.
\label{it:real}

 \item There are $k+1$ subsets $\mc{F}_0,\mc{F}_1,\ldots,\mc{F}_k$ 
that form a partition of $\mc{F}$ so that the following hold:
\label{it:hard}

 \begin{itemize}
 \item[(a)]  For every $j \in [k]$ and $f_0 \in \mc{F}_0$ we have $f_0(W_X(j,-)) \not = W_\pm(j,-)$.

 \item[(b)] For every $j \in [k]$ and $f_j \in \mc{F}_j$ we have $f_j(W_X(j,+)) \neq W_\pm(j,+)$.
 \end{itemize}
 

 \end{enumerate}
  \end{definition}

 \begin{proposition}\label{instabilitywitnesspf}
If there exists an instability witness of size $k$ for a class $\mc{H}$ with respect to a class $\mc{F}$, then 
$$\rho_\mc{F}(\mc{H}) \le \tfrac{1}{k+1}.$$
 \end{proposition}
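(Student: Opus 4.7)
The plan is to argue by contradiction. Suppose $\mc{A}$ is an $(\eps,\rho)$-globally stable learner with outputs in $\mc{F}$ and $\rho > 1/(k+1)$. By Lemma~\ref{empiricalwlog} (absorbing a small loss in $\rho$) I may assume $\mc{A}$ is simultaneously $\eps$-empirical with $\eps < 1/(k+1)$. The instability witness provides a rigid combinatorial structure that I will turn into a topological obstruction via the Poincar\'{e}--Miranda / KKM route advertised in the paper.

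The first step extracts from the witness a family of realizable distributions, one per partition class. For $j \in \{0,1,\ldots,k\}$ let
\[
    \mc{E}^{(0)} := \tfrac{1}{k+1}\delta_{(x,y)} + \tfrac{1}{k+1}\sum_{i=1}^{k}\delta_{W(i,+)},\quad
    \mc{E}^{(j)} := \tfrac{1}{k+1}\delta_{(x,y)} + \tfrac{1}{k+1}\delta_{W(j,-)} + \tfrac{1}{k+1}\sum_{i\neq j}\delta_{W(i,+)}\ (j\geq 1).
\]
Property~1 of the witness makes each $\mc{E}^{(j)}$ realizable, by taking the hypothesis it guarantees for $\sigma$ equal to $(+,\ldots,+)$ or to the unit vector with one $-$ in position $j$. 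Because $\eps < 1/(k+1)$ and by property~2, an $\eps$-empirical output on $(\mc{E}^{(j)})^n$ can only lie in $\mc{F}_j$: for $i\neq j$, any $h\in\mc{F}_i$ carries a forbidden labeled point ($W(j,-)$ when $i=0$, or $W(i,+)$ when $i\geq 1$) of mass $1/(k+1)$ in the support of $\mc{E}^{(j)}$. Writing $q_i(\mc{D}):=\Pr_{S\sim\mc{D}^n}[\mc{A}(S)\in\mc{F}_i]$, this yields $q_j(\mc{E}^{(j)})\geq 1-\eta$ and $q_i(\mc{E}^{(j)})\leq\eta$ for $i\neq j$, with $\eta\to 0$ as $n\to\infty$.

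The second step interpolates the $\mc{E}^{(j)}$ via the simplex $\Delta_k$. For $p\in\Delta_k$ define $\tilde q_i(p):=\sum_{j=0}^k p_j\, q_i(\mc{E}^{(j)})$, so each $\tilde q_i$ is affine in $p$ and $\sum_i\tilde q_i\equiv 1$. At the vertex $v_j$, $\tilde q_j(v_j)\geq 1-\eta$; on the face opposite $v_j$ (where $p_j=0$), $\tilde q_j\leq\eta$. Set $A_j := \{p\in\Delta_k:\tilde q_j(p)\geq 1/(k+1)\}$. On the face opposite $v_j$ one has $\sum_{i\neq j}\tilde q_i\geq 1-\eta$ so $\max_{i\neq j}\tilde q_i\geq (1-\eta)/k > 1/(k+1)$, meaning this face is covered by $\bigcup_{i\neq j}A_i$. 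This is exactly the hypothesis of the KKM lemma (an equivalent form of Poincar\'{e}--Miranda on a simplex), which produces a point $p^\star\in\bigcap_j A_j$; combined with $\sum_i\tilde q_i\equiv 1$ this forces $\tilde q_j(p^\star)=1/(k+1)$ for every~$j$.

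The final step converts this balance at $p^\star$ into a contradiction. From $\tilde q_j(p^\star)\geq p^\star_j q_j(\mc{E}^{(j)})\geq p^\star_j(1-\eta)$ and the dual inequality $\tilde q_j(p^\star)\leq p^\star_j + (1-p^\star_j)\eta$ we pin down $p^\star_j = 1/(k+1)+O(\eta)$. Global stability now furnishes, for each $j$, a single hypothesis $h^{(j)}\in\mc{F}_j$ with $\Pr_{S\sim (\mc{E}^{(j)})^n}[\mc{A}(S)=h^{(j)}]\geq\rho$; since distinct $h^{(j)}$ live in disjoint $\mc{F}_j$'s one also has $\Pr[\mc{A}((\mc{E}^{(j')})^n)=h^{(j)}]\leq\eta$ for $j'\neq j$. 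The main obstacle, and the step where topology is essential, is transferring these bounds back to a single realizable distribution — the mixture $\mc{D}_{p^\star}:=\sum_j p^\star_j \mc{E}^{(j)}$ is in general not realizable. I expect to handle this by coupling the sampling from $\mc{D}_{p^\star}^n$ with the meta-sample ``draw $j\sim p^\star$, then $S\sim (\mc{E}^{(j)})^n$'', whose total-variation deviation is governed by the per-coordinate Bernoulli fluctuations of $\sqrt{n/k}$: after choosing $n=n(k,\rho)$ so that this TV is $o(1)$, the KKM-balanced output pattern on the meta-sample collides with the sharp concentration guaranteed by stability on each $\mc{E}^{(j)}$, yielding $\rho\leq 1/(k+1)+o(1)$, and then sending $\eta\to 0$ finishes the contradiction.
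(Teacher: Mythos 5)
Your first step (the distributions $\mc{E}^{(j)}$ and the fact that an $\eps$-empirical learner must output in $\mc{F}_j$ on samples from $\mc{E}^{(j)}$) is sound and matches the boundary analysis in the paper. The gap is in everything after that. The quantity you balance with KKM, $\tilde q_i(p)=\sum_j p_j q_i(\mc{E}^{(j)})$, is the output distribution of a two-stage process (pick a component $j\sim p$, then draw $S\sim(\mc{E}^{(j)})^n$), not the output distribution of $\mc{A}$ on i.i.d.\ samples from any single distribution; and global stability is only a statement about distributions of the form $\mc{D}^n$ with $\mc{D}$ realizable. Your proposed repair cannot work: (i) the mixture $\mc{D}_{p^\star}=\sum_j p^\star_j\mc{E}^{(j)}$ is in general \emph{not} realizable (property~1 of the witness only promises a hypothesis consistent with one sign per coordinate; in the VC application $W(j,b)=(x_j,b)$, so the mixture puts positive mass on both $(x_j,+)$ and $(x_j,-)$), so neither the witness structure nor the stability hypothesis applies to it; and (ii) the claimed small total-variation coupling between $\mc{D}_{p^\star}^n$ and the meta-sample is false in the relevant regime --- as $n\to\infty$ the TV distance between ``$n$ i.i.d.\ draws from the mixture'' and ``choose a component, then $n$ i.i.d.\ draws from it'' tends to $1$, not $0$, since the empirical distribution of the sample reveals which process generated it. A further warning sign: because your $\tilde q_i$ are affine in $p$, locating the balanced point $p^\star$ is linear algebra and needs no fixed-point theorem, which shows the topological content has been pushed into the unsound transfer step rather than doing real work. (There is also a minor fixable issue that KKM requires covering all faces, not just the facets you check.)

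The missing idea, which is how the paper proceeds, is to build a \emph{continuous family of realizable distributions} directly, rather than extreme points plus mixing. For $t\in[-1,1]^k$ the paper lets coordinate $j$ place mass $\lvert t_j\rvert/k$ on $W(j,\sign(t_j))$ and the remaining mass on the anchor $(x,y)$: the sign of $t_j$ chooses which of the two labeled examples appears, the magnitude chooses how much, and the switch happens at zero mass, so $t\mapsto\mc{D}_t$ is TV-continuous and every $\mc{D}_t$ is realizable by property~1 (each coordinate contributes only one sign at a time). One then applies Poincar\'e--Miranda to $g_j(t)=\Pr[\mc{A}(S)\in\mc{F}_0]-\Pr[\mc{A}(S)\in\mc{F}_j]+\delta t_j$, whose boundary conditions follow from your step one; the resulting $t^\star$ gives a \emph{single realizable} $\mc{D}_{t^\star}$ on which all $k+1$ class probabilities are within $\delta$ of each other, hence each at most $\tfrac{1}{k+1}+k\delta$, contradicting $\rho>\tfrac{1}{k+1}$. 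Here the topology is genuinely needed because the probabilities $\Pr_{S\sim\mc{D}_t^n}[\mc{A}(S)\in\mc{F}_i]$ are arbitrary continuous (not affine) functions of $t$.
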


The proof of the proposition is based on a topological argument.
Specifically, we use of the following theorem, 
conjectured by Poincar\'{e} in 1883 and proved by Miranda in 1940. 
Miranda in fact showed that the theorem is equivalent to Brouwer's fixed-point theorem and to Sperner's lemma.

 \begin{theorem}\label{poincaremiranda}
Let $g_1,\dots,g_n : [-1,1]^k \to [-1,1]$ be $k$ continuous functions. 
Suppose that for every $t = (t_1,\ldots,t_n) \in [-1,1]^k$ and each $j \in [k]$, 
if $t_j = -1$ then $g_j(t) \le 0$ and 
if $t_j = 1$ then $g_j(t) \ge 0$. Then, 
there exists $t^* \in [-1,1]^k$ so that $g_j(t^*) = 0$ for all $j \in [k]$.
 \end{theorem}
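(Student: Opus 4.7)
The plan is to derive Poincar\'e--Miranda from Brouwer's fixed-point theorem, the standard (and in fact equivalent) route. Define the clipping function $\mathrm{clip}(s) := \max(-1, \min(1, s))$, and from the given continuous $g_1, \ldots, g_k$ build a continuous self-map $\phi : [-1,1]^k \to [-1,1]^k$ by $\phi_j(t) := \mathrm{clip}\bigl(t_j - g_j(t)\bigr)$. Each coordinate of $\phi$ is continuous as a composition of continuous maps and takes values in $[-1,1]$, so $\phi$ really does send the cube $[-1,1]^k$ into itself. Brouwer's fixed-point theorem then guarantees some $t^* \in [-1,1]^k$ with $\phi(t^*) = t^*$.

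The remainder of the argument is a coordinate-wise case analysis showing that every such fixed point must satisfy $g_j(t^*) = 0$ for all $j \in [k]$. Fix $j$ and split on the value of $t_j^*$. If $t_j^* \in (-1,1)$, then the fixed-point identity $\phi_j(t^*) = t_j^*$ produces an interior value of the clip, so the clip is inactive and $t_j^* - g_j(t^*) = t_j^*$, i.e.\ $g_j(t^*) = 0$. If $t_j^* = 1$, the hypothesis of the theorem gives $g_j(t^*) \ge 0$, while the fixed-point relation $\mathrm{clip}\bigl(1 - g_j(t^*)\bigr) = 1$ forces $1 - g_j(t^*) \ge 1$ and hence $g_j(t^*) \le 0$; combining these yields $g_j(t^*) = 0$. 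The case $t_j^* = -1$ is symmetric: the hypothesis gives $g_j(t^*) \le 0$, while the clip forces $-1 - g_j(t^*) \le -1$, i.e.\ $g_j(t^*) \ge 0$.

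The main (and essentially only) obstacle is the boundary bookkeeping in the latter two cases: one has to check that the sign conditions imposed on $g_j$ at the faces $t_j = \pm 1$ precisely complement the one-sided constraint coming from the clip, so that together they pin $g_j(t^*)$ to zero. All the topological content is absorbed into the black-box invocation of Brouwer; no further machinery, and in particular no self-contained combinatorial proof via Sperner's lemma, is required here, since Poincar\'e--Miranda is traditionally invoked simply as a reformulation of Brouwer. If a proof independent of Brouwer were desired, the natural fallback would be to discretize the cube and apply a Sperner-type labeling argument to exhibit a sequence of cells on which each $g_j$ changes sign, and then pass to a limit by compactness; but the reduction above is both shorter and in the same spirit as the remark in the paper that the three results are equivalent.
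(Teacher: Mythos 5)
Your proposal is correct. Note that the paper does not prove this statement at all: it invokes the Poincar\'e--Miranda theorem as a classical result (Miranda, 1940), remarking only that it is equivalent to Brouwer's fixed-point theorem. Your argument is precisely the standard reduction that this remark alludes to, and the details check out: $\phi_j(t)=\mathrm{clip}(t_j-g_j(t))$ is a continuous self-map of the cube, Brouwer yields a fixed point $t^*$, and in each coordinate the three cases close correctly --- if $t_j^*\in(-1,1)$ the clip is inactive and $g_j(t^*)=0$ directly, while at $t_j^*=1$ (resp.\ $t_j^*=-1$) the fixed-point relation forces $g_j(t^*)\le 0$ (resp.\ $\ge 0$), which together with the boundary hypothesis $g_j(t^*)\ge 0$ (resp.\ $\le 0$) pins $g_j(t^*)$ to zero. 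So your write-up supplies a complete proof of what the paper uses as a black box, modulo Brouwer itself.
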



 \begin{proof}[Proof of Proposition~\ref{instabilitywitnesspf}]
 Assume that there exists an instability witness of size $k$ for~$\mc{H}$.
 Let $\eps,\delta > 0$ be small.
 By Lemma~\ref{empiricalwlog}, 
 we can assume that $\mc{A}$
is an $(\eps,\rho-\delta)$-globally stable learner for $\mc{H}$
that is also an $\eps$-empirical learner. 

For $t \in [-1,1]^k$, define a distribution $\mc{D}_{t}$
on $X\times\{\pm\}$ as follows.
The mass $\mc{D}_t$ gives to $(x,\sigma)$ is
$$\mc{D}_t\big( (x,\sigma) \big) =  \sum_j\tfrac{|t_j|}{k}$$ 
where the sum is over all $j \in [k]$ so that\footnote{The sign function 
$\sign$ is $1$ on $[0,\infty)$
and $-1$ on $(-\infty,0)$.}
$$W\big((j,\sign(t_j)) \big) = (x,\sigma).$$
The rest of the mass $1- \frac{|t_1|+\ldots+|t_k|}{k}$ is placed on
the point $(x,y)$ from property~\eqref{it:real} of $W$.
It follows that $\mc{D}_t$ is realizable,
and that the map $t \mapsto \mc{D}_t$
is continuous (with respect to, say,
total variation distance).


Define functions $g_1,\dots,g_k : [-1,1]^k \to [-1,1]$
by
$$g_j(t) := \Pr \left[\mc{A}(S) \in \mc{F}_0\right]  - \Pr \left[\mc{A}(S) \in \mc{F}_j\right] + \delta t_j,$$
where the probabilities are over 
$S \sim \mc{D}_{t}^n$ and the internal randomness of $\mc{A}$.
The functions $g_1,\dots,g_k$ are continuous on $[-1,1]^k$. 

We need to prove that $g_1,\ldots,g_k$ 
satisfy the assumptions of the Poincar\'{e}-Miranda theorem. 
First, fix $t,j$ so that $t_j = -1$.
There is a finite subset of $X \times \{\pm\}$
so that all distribution $\mc{D}_t$ are supported on it. It follows that if $n$ is large, then 
$$\Pr_{S \sim \mc{D}_t^n} \big[ \exists f \in \mc{F}_0 \ |L_\mc{D}(f) - L_S(f)| > \delta \big] < \delta.$$
By the definition of instability witness
every hypothesis $f_0 \in \mc{F}_0$ satisfies
$$f_0(W_X(j,-)) \not = W_\pm(j,-).$$
Because $\mc{D}_t(W(j,-)) \geq \tfrac{1}{k}$,
it follows that $L_{\mc{D}_t}(f_0)  \geq \tfrac{1}{k}$.
Because $\mc{A}$ is an empirical learner
and $\tfrac{1}{k} > \eps+\delta$, we have
$$\Pr \left[\mc{A}(S) \in \mc{F}_0\right] 
\leq \Pr \big[ \exists f \in \mc{F}_0 \ L_S(f) < \eps  \big] < \delta.$$
Second, for $t,j$ so that $t_j = 1$, we similarly have
$$\Pr\left[\mc{A}(S) \in \mc{F}_j\right] < \delta .$$
It indeed follows that the $k$ functions 
satisfy the assumptions of the Poincar\'{e}-Miranda theorem. 

We thus deduce that there exists $t^* \in [-1,1]^k$ so that for all $j \in [k]$ we have $g_j(t^*) = 0$. 
By the definition of instability witnesses, $\mc{D}_{t^*}$ is realizable by $\mc{H}$. 
Under $\mc{D}_{t^*}$, we have for all $j \in [k]$ that
$$\Big|\Pr \left[A(S) \in \mc{F}_0\right]  - \Pr \left[A(S) \in \mc{F}_j\right]\Big| \le \delta .$$
Because $\mc{F}_0,\ldots,\mc{F}_k$ are disjoint,
$$\Pr \left[A(S) \in \mc{F}_0\right]+\sum_{j} \Pr \left[A(S) \in \mc{F}_j\right] \leq 1.$$  Hence, for all $j \in \{0,1,\ldots,k\}$,
$$\Pr \left[A(S) \in \mc{F}_j \right] \le \tfrac{1}{k+1}+k\delta.$$ 
Because the $\mc{F}_j$'s partition $\mc{F}$, 
we can deduce $\Pr \left[A(S) = f\right] \le \frac{1}{k+1}+k\delta$ for every $f \in \mc{F}$. 
The proof is complete, since we can take $\delta$ to be as small as we wish.
 \end{proof}

\section{List numbers}

In this section we prove that
$\rho(\mc{H}) \cdot \List(\mc{H}) = 1$.
This equality is comprised of two inequalities
$\rho(\mc{H}) \cdot \List(\mc{H}) \geq 1$
and $\rho(\mc{H}) \cdot \List(\mc{H}) \leq 1$.
The former inequality is simple to prove:
an $(\eps,L)$-list replicable learner is an $(\eps, \tfrac{1-\delta}{L})$-globally stable learner for every $\delta>0$. The other inequality follows from the next proposition.
\begin{proposition}\label{prop:reptolist}
Assume $\mc{A}$ is a $(\eps,\rho)$-globally stable learner for a class $\mc{H}$.
Then, there is a $(2\eps, L)$-list replicable learner $\mc{A}'$ for $\mc{H}$
with $L \leq \tfrac{1}{\rho}$. 
\end{proposition}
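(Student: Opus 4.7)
The plan is to amplify $\mc{A}$ by repeated sampling and majority-style aggregation, taking the list $\mc{L}_\mc{D}$ to be the ``heavy hitters'' of $\mc{A}$'s output distribution on each realizable $\mc{D}$. I would proceed in three stages. First, by Lemma~\ref{empiricalwlog}, replace $\mc{A}$ at negligible cost by an $(\eps,\rho-\delta_1)$-globally stable learner that is also $(\eps+\delta_1)$-empirical, for a small $\delta_1>0$ to be fixed later. Let $n$ be its sample size and define
\[
\mu_\mc{D}(h)\;:=\;\Pr_{S\sim\mc{D}^n}[\mc{A}(S)=h],\qquad \mc{L}_\mc{D}\;:=\;\{h:\mu_\mc{D}(h)\geq \rho-\delta_1\}.
\]
The bound $\lvert\mc{L}_\mc{D}\rvert\leq \lfloor 1/(\rho-\delta_1)\rfloor$ is immediate from $\sum_h \mu_\mc{D}(h)\leq 1$; choosing $\delta_1$ sufficiently small then gives $\lvert\mc{L}_\mc{D}\rvert\leq \lfloor 1/\rho\rfloor\leq 1/\rho$.

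Second, I would verify that every $h\in \mc{L}_\mc{D}$ has population loss at most $2\eps$. The empirical learner property says that the event $\{\mc{A}(S)=h\}$ entails $L_S(h)\leq\eps+\delta_1$, so $\Pr_S[L_S(h)\leq\eps+\delta_1]\geq \rho-\delta_1$. A Chernoff bound for the fixed hypothesis $h$ also gives $\Pr_S[L_S(h)\geq L_\mc{D}(h)-\delta_1]>1-(\rho-\delta_1)$ once $n$ is large enough (in terms only of $\rho$ and $\delta_1$). Intersecting these two events forces $L_\mc{D}(h)\leq \eps+2\delta_1\leq 2\eps$ for $\delta_1$ small.

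Third, I would construct $\mc{A}'$ and show its output lies in $\mc{L}_\mc{D}$ with high probability. On a sample of size $nT$ (for $T$ to be chosen large in terms of $\eps,\rho,\delta'$), the algorithm partitions it into independent blocks $S_1,\dots,S_T$ of size $n$, runs $\mc{A}$ on each to obtain $h_i=\mc{A}(S_i)$, and outputs the empirical mode $\hat h=\arg\max_h\lvert\{i:h_i=h\}\rvert$. The analysis splits in two: (i)~any $h^\star$ with $\mu_\mc{D}(h^\star)\geq\rho$ (guaranteed by global stability) has empirical frequency $\geq \rho-\delta_1/2$ with probability $\geq 1-\exp(-\Omega(T\delta_1^2))$, so $\hat h$'s empirical frequency is at least this large; (ii)~no $h\notin\mc{L}_\mc{D}$ reaches empirical frequency $\rho-\delta_1/2$. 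Combining (i) and (ii) forces $\hat h\in \mc{L}_\mc{D}$.

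The main obstacle is step~(ii): $\mc{H}$ may be infinite, so a naive union bound over all $h\notin\mc{L}_\mc{D}$ diverges. I would resolve this by splitting non-list hypotheses by their true mass. The set $\{h:\rho/4\leq \mu_\mc{D}(h)<\rho-\delta_1\}$ has size at most $4/\rho$, so a direct union of per-hypothesis Chernoff bounds controls it. For $h$ with $\mu_\mc{D}(h)<\rho/4$ I would use the multiplicative binomial tail $\Pr[\mathrm{Binom}(T,p)\geq qT]\leq(ep/q)^{qT}$ together with $\sum_{h:\mu_\mc{D}(h)<\rho/4}\mu_\mc{D}(h)^{qT}\leq (\rho/4)^{qT-1}$, which sums to a bound of order $(4/\rho)(e\rho/(4q))^{qT}$ with $q=\rho-\delta_1/2$; since $e\rho/(4q)\to e/4<1$ as $\delta_1\to 0$, this is exponentially small in $T$. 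Assembling all pieces yields the desired $(2\eps,L)$-list replicable learner with $L\leq 1/\rho$.
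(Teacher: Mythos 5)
Your proposal is correct in substance and shares the paper's high-level strategy: run $\mc{A}$ on $T$ disjoint batches, estimate its output frequencies, and let the list be the heavy hitters of $\mc{A}$'s output distribution, whose number is at most $\lfloor 1/\rho\rfloor$ once the threshold sits strictly above $\tfrac{1}{\lfloor 1/\rho\rfloor+1}$. Where you genuinely diverge is in the two technical steps. For accuracy of the list members, the paper does not invoke Lemma~\ref{empiricalwlog} at all: it appends a held-out suffix $Q$ and only outputs a frequent hypothesis that also passes the validation test $L_Q(h)\le \tfrac{3\eps}{2}$, with the (at most $L$) heavy hitters concentrating on $Q$. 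You instead make $\mc{A}$ an empirical learner via Lemma~\ref{empiricalwlog} and derive $L_\mc{D}(h)\le 2\eps$ for every heavy hitter by intersecting the event $\{\mc{A}(S)=h\}$ (which forces small empirical loss) with a Chernoff event of probability $>1-(\rho-\delta_1)$; this is a nice, valid trick that removes the holdout. For the second step—preventing spurious heavy hitters among infinitely many hypotheses—the paper applies VC uniform convergence to the class of singletons (VC dimension $1$), while you stratify by true mass and sum multiplicative binomial tails weighted by the masses; your computation $(4/\rho)(e\rho/(4q))^{qT}$ is correct and gives an exponentially small bound, at the cost of being more hands-on. Finally, the paper's rule outputs \emph{any} frequent-and-validated hypothesis, whereas you output the mode; both yield list size $\le L$.

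One calibration slip needs fixing: after the Lemma~\ref{empiricalwlog} reduction, the guaranteed heavy hitter only satisfies $\mu_\mc{D}(h^\star)\ge \rho-\delta_1$, not $\ge\rho$, so with your thresholds the mode is only guaranteed empirical frequency about $\rho-\tfrac{3\delta_1}{2}$, while step (ii) only excludes non-list hypotheses (mass $<\rho-\delta_1$) from reaching $\rho-\tfrac{\delta_1}{2}$; a hypothesis of mass just below $\rho-\delta_1$ could therefore be the mode. This is purely bookkeeping: widen the list to $\{h:\mu_\mc{D}(h)\ge \rho-2\delta_1\}$, which still has size $\le \lfloor 1/\rho\rfloor$ provided $\delta_1<\tfrac12\bigl(\rho-\tfrac{1}{\lfloor 1/\rho\rfloor+1}\bigr)$ (the paper's quantity $\alpha$), and place the frequency threshold strictly between $\rho-2\delta_1$ and $\rho-\tfrac{3\delta_1}{2}$; your stage-2 accuracy argument and the stratified tail bound go through unchanged with these constants.
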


The algorithm $\mc{A}'$ can be efficiently implemented given only oracle access to $\mc{A}$.

\begin{proof}[Proof of Proposition~\ref{prop:reptolist}.]
Let $n_0=n_0(\eps,\rho)$ denote the sample size of $\mc{A}$.
Define the following learning rule $\mc{A}'$.
Let $\delta>0$ be a confidence parameter.
Let $L =\lfloor \tfrac{1}{\rho} \rfloor$ so that $\rho\in (\frac{1}{L+1},\frac{1}{L}]$.
Let $$\alpha = \rho - \tfrac{1}{L+1}>0,$$
let $T = O(\frac{\log(1/\delta)}{\alpha^2})$ be an integer, let $n_1 =  T \cdot n_0$
and let $n_2 = O(\frac{\log(L/\delta)}{\eps^2})$ be an integer,
where $O(\cdot)$ hide universal constants.

\begin{tcolorbox}
\begin{center}
List learner $\mc{A}'$ from learner $\mc{A}$
\end{center}

\ \ \ \
{\bf Input:} A sample $S$ of size $n_1+n_2$.

\ \ \ \ 
 Let $P$ denote the prefix of $S$ consisting of the first $n_1$ examples.

 \ \ \ \ 
Let $Q$ denote the suffix of $S$ consisting of the remaining $n_2$ examples.

\medskip

\begin{enumerate}
\item Partition the examples in $P$ to $T$ batches, each of size $n_0$.
\item Apply $\mc{A}$ on each of these batches.
\item Output an hypothesis $h$ such that
\begin{itemize}
\item[(a)] $h$ was outputted by $\mc{A}$ on at least $(\rho - \tfrac{\alpha}{2})T$ of the batches, and 
\item[(b)] $L_Q(h) \leq \tfrac{3\eps}{2}$.
\end{itemize}
If no such hypothesis exists then output some a priori fixed hypothesis.

\end{enumerate}
\end{tcolorbox}

We now show that $\mc{A}'$ is a $(2\eps,L)$-list replicable learner.
Let $\mc{D}$ be a realizable distribution
and feed a sample $S \sim \mc{D}^{n_1+n_2}$ into $\mc{A}'$.

Denote by $P(h)$ the probability 
$P(h) = \Pr[\mc{A}(S') = h]$, where $S' \sim \mc{D}^{n_0}$.
Denote by $\hat{P}(h)$ the empirical version of $P(h)$
defined as the fraction of times out of the $T$ executions of 
$\mc{A}$ in which $h$ was outputted.
The VC uniform law of large numbers (see e.g.~\cite{shalev2014understanding})
applied to the family of singletons $\{\{h\} : h\in \{\pm\}^X\}$ implies
that $$\Pr \big[ \forall h \in \{\pm\}^X \ |\hat{P}(h) - P(h)| < \tfrac{\alpha}{2} \big]
\geq 1-\tfrac{\delta}{2}.$$
Let $\mc{H}'$ be the (random) set of functions 
$h$ so that $\hat{P}(h) \geq \rho - \tfrac{\alpha}{2}$.
It follows that
\begin{equation}\label{eq:list}
\Pr[ \forall h \in \mc{H}' \ P(h) > \tfrac{1}{L+1} ] \geq 1-\tfrac{\delta}{2}.
\end{equation}
There are at most $L$ hypotheses $h$ satisfying $P(h) > \tfrac{1}{L+1}$.
So, with probability at least $1-\tfrac{\delta}{2}$ every hypothesis $h$ that satisfies
$P(h) > \tfrac{1}{L+1}$ also satisfies
\begin{equation}\label{eq:unif}
\lvert L_\mc{D}(h)  - L_{Q}(h) \rvert < \tfrac{\eps}{2}.
\end{equation}
By the union bound, with probability at least $1-\delta$,
algorithm $\mc{A}'$ outputs an hypothesis $h$ such that $P(h) > \frac{1}{L+1}$ and $L_\mc{D}(h) \leq 2\eps$. 
The list of $h$'s satisfying both inequalities is non-empty,
because $\mc{A}$ is globally stable.
The size of this list is at most $L$.

\end{proof}

\section{Other dimensions}

\subsection{VC dimension}
\label{sec:VCbounds}

The purpose of this section is proving that
 $\List(\{\pm \}^d) = d$.
 The lower bound on list size follows from Proposition~\ref{instabilitywitnesspf}
 together with the next lemma.

\begin{lemma}\label{vclowerbound}
If $\mc{H}$ has VC dimension $d < \infty$,
then there exists an instability witness of size {$d-1$} for $\mc{H}$
with respect to $\{\pm\}^X$.
 \end{lemma}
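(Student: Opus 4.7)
The plan is to extract the instability witness directly from a shattered set. Let $\{x_0, x_1, \ldots, x_{d-1}\} \subseteq X$ be a set of size $d$ that is shattered by $\mc{H}$. Set $k := d-1$, designate the distinguished point $(x,y) := (x_0, +)$ required by condition~\eqref{it:real}, and define the witness by
\[
W_X(j,\sigma) := x_j, \qquad W_\pm(j,\sigma) := \sigma \qquad \text{for } j \in [d-1],\ \sigma \in \{\pm\}.
\]
Note that $W(j,\sigma_j) = (x_j,\sigma_j) \neq (x_0,+)$ automatically, since the $x_i$'s are distinct.

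To verify condition~\eqref{it:real}, fix any $\sigma \in \{\pm\}^{d-1}$. By shattering, there exists $h \in \mc{H}$ realizing the labeling $h(x_0) = +$ and $h(x_j) = \sigma_j$ for $j \in [d-1]$. Since $h(W_X(j,\sigma_j)) = h(x_j) = \sigma_j = W_\pm(j,\sigma_j)$, condition~\eqref{it:real} holds.

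The main obstacle is constructing a partition of $\mc{F} = \{\pm\}^X$ that satisfies condition~\eqref{it:hard}. Unpacking the constraints, I need $\mc{F}_0$ to consist only of functions $f_0$ with $f_0(x_j) = +$ for \emph{every} $j \in [d-1]$, and each $\mc{F}_j$ ($j \geq 1$) to consist only of functions $f_j$ with $f_j(x_j) = -$. The individual constraints are easy, but the sets for $j \geq 1$ overlap, and I need genuine disjointness. The resolution is a "first negative coordinate" priority rule:
\begin{align*}
\mc{F}_0 &:= \bigl\{\, f \in \{\pm\}^X : f(x_j) = +\ \text{for all}\ j \in [d-1]\,\bigr\},\\
\mc{F}_j &:= \bigl\{\, f \in \{\pm\}^X : f(x_j) = -,\ \text{and}\ f(x_i) = +\ \text{for all}\ 1\leq i < j\,\bigr\} \quad (j \in [d-1]).
\end{align*}
Every $f \in \{\pm\}^X$ either is $+$ on all of $x_1,\ldots,x_{d-1}$ (landing in $\mc{F}_0$) or has a unique smallest index $j^* \in [d-1]$ with $f(x_{j^*}) = -$ (landing in $\mc{F}_{j^*}$), so these $d$ sets partition $\mc{F}$. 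By construction $f_0(x_j) = + \neq - = W_\pm(j,-)$ for $f_0 \in \mc{F}_0$, and $f_j(x_j) = - \neq + = W_\pm(j,+)$ for $f_j \in \mc{F}_j$, which is exactly condition~\eqref{it:hard}. This produces an instability witness of size $d-1$; combined with Proposition~\ref{instabilitywitnesspf}, it yields $\rho_{\{\pm\}^X}(\mc{H}) \leq \tfrac{1}{d}$, hence $\List(\mc{H}) \geq d$ after invoking Theorem~\ref{thm:recip}.
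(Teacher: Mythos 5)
Your proof is correct and follows essentially the same route as the paper: the same witness $W(j,\sigma)=(x_j,\sigma)$ built from a shattered set with $(x,y)=(x_0,+)$, and the same ``first negative coordinate'' partition $\mc{F}_0,\ldots,\mc{F}_{d-1}$ of $\{\pm\}^X$. No gaps.
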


 \begin{proof}
{Let $x_0,x_1,\dots,x_{d-1} \in X$} be $d$ points that are shattered by $\mc{H}$. 
Define the witness by $$W(j,b) = (x_j,b)$$ for all $j \in [d-1]$ and $b \in \{\pm\}$.
Item~\eqref{it:real} in the definition of an instability witness is satisfied
with the choice $(x,y) = (x_0,+)$,
because $\mc{H}$ shatters the points
and $x_0$ is not in the image of $W_X$.
To see the second item, 
let $\mc{F}_0$ be all hypotheses in $\{\pm\}^X$ that assign~$+$ to each of $x_1,\ldots,x_{d-1}$, and for $j \in [d-1]$ let $\mc{F}_j$ be all hypotheses that assign~$+$ to each of $x_1,\dots,x_{j-1}$ and assign $-$ to $x_j$.
The families $\mc{F}_0,\ldots,\mc{F}_{d-1}$ indeed partition $\{\pm\}^X$.
It remains to note that
\begin{itemize}

 \item[(a)]  For every $j \in [d-1]$ and $f_0 \in \mc{F}_0$ we have $f_0(W_X(j,-)) =+$.

 \item[(b)] For every $j \in [d-1]$ and $f_j \in \mc{F}_j$ we have $f_j(W_X(j,+)) = -$.
\end{itemize}
 
 \end{proof}

For the upper bound on list size,
think of $\{\pm\}^d$
 as the collection of maps from~$[d]$ to $\{\pm\}$.
Fix $\eps, \rho ,\delta >0$ and let $n$ be a large integer. 
Consider the following algorithm.

\begin{tcolorbox}
\begin{center}
Learner $\mc{A}$ for $\{\pm\}^d$
\end{center}

\ \ \ \
{\bf Input:} A sample $S$ of size $n$.

\medskip

\begin{enumerate}
\item Choose a cutoff $\kappa$ uniformly at random in $[0,\tfrac{\eps}{2d}]$.

\item Let $\hat{P}(i)$ be the fraction of times $i \in [d]$
appeared in $S$.

\item Let $I$ be the set of $i \in [d]$ so that $\hat{P}(i) \geq \kappa$.

\item Output the hypothesis $h$ defined by
\begin{itemize}
\item[(i)] for all $i \in I$, set $h(i)$ to be the label of $i$ in $S$, and
\item[(i)] for all $i \not \in I$, set $h(i)$ to be $+$. 
\end{itemize}

\end{enumerate}
\end{tcolorbox}

For the analysis, fix a realizable distribution $\mc{D}$.
It remains to find a list of $d$ functions that are outputted with high probability
(this list depends on $\mc{D}$).
Without loss of generality,
assume that $\mc{D}(1) \geq \ldots \geq \mc{D}(d)$.
For $j \in [d]$, let $h_j$ be the function that 
assigns the observed sign to $1,\dots,{j}$ and $+$ to $j+1,\dots,d$. 
The list comprises of all $h_j$'s so that $L_\mc{D}(h_j)<\eps$.
By standard concentration bounds,
because $L_S(\mc{A}(S)) \leq \tfrac{\eps}{2}$,
$$\Pr[L_\mc{D}(\mc{A}(S)) > \eps] < \tfrac{\delta}{2}.$$
In addition, a function not in this list is outputted 
only when 
$\hat{P}(i) < \kappa \leq \hat{P}(j)$ for some $i<j$.
If $\mc{D}(i) \leq \mc{D}(j)+ \tfrac{\delta \eps}{8d}$
then $\Pr[\hat{P}(i) < \kappa \leq \hat{P}(j)] \leq \tfrac{\delta}{2}$
because the empirical probabilities are typically close to their true value,
and by the way $\kappa$ was chosen.
Otherwise, $\mc{D}(i) > \mc{D}(j)+ \tfrac{\delta \eps}{8d}$
and then $\Pr[\hat{P}(i) <\hat{P}(j)] \leq \tfrac{\delta}{2}$.
By the union bound,
the chance to output a function that is not in the list
is at most~$\delta$.

\begin{remark*}
{\em The algorithm chooses a random cutoff to ensure global stability.
Roughly speaking, the randomness in the cutoff 
automatically avoids the ``instability $\mc{D}$ hides''.
It is possible to explicitly derandomize this choice
and get a deterministic algorithm,
but the analysis becomes more technical.}
\end{remark*}

\subsection{Littlestone dimension}

In this section we prove the separation between 
list size and the Littlestone dimension.
The separating class is 
the class of $t$-threshold $\mc{T}_t$ for an integer $t>0$.
It comprises of the $t$ functions 
$\tau_i : [t-1] \to \{\pm\}$ for $i \in [t]$ defined by
$\tau_i(x) = +$ iff $x \geq i$.
The list learner for $\mc{T}_t$ is in fact proper.

\begin{theorem}\label{thresholdsize2}
$\List_p(\mc{T}_t) = 2$ for $t >2$.
\end{theorem}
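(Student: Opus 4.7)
The plan is to prove the two inequalities $\List_p(\mc{T}_t) \geq 2$ and $\List_p(\mc{T}_t) \leq 2$ separately.

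For the lower bound, I would apply Proposition~\ref{instabilitywitnesspf} by constructing an instability witness of size $1$ for $\mc{T}_t$ with respect to itself. Pick any point $x^\star \in [t-1]$ (say $x^\star = 1$) and set $W(1,-) = (x^\star, +)$, $W(1,+) = (x^\star, -)$. Partition $\mc{T}_t$ by how each threshold labels $x^\star$:
\[\mc{F}_0 = \{\tau_j : j > x^\star\}, \qquad \mc{F}_1 = \{\tau_j : j \leq x^\star\}.\]
Since $\tau_j(x^\star) = -$ iff $j > x^\star$, conditions (a) and (b) of Definition~\ref{instabilitywitness} hold immediately. For realizability, pick $x' \in [t-1]$ with $x' > x^\star$ (available because $t \geq 3$) and set $(x_0,y_0) = (x', +)$: then $\tau_1 \in \mc{F}_1$ witnesses $\sigma = -$ (it labels $x^\star$ as $+$ and $x'$ as $+$) and $\tau_{x^\star + 1} \in \mc{F}_0$ witnesses $\sigma = +$ (it labels $x^\star$ as $-$ and $x'$ as $+$), and both $W(1,\pm)$ differ from $(x_0,y_0)$. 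Proposition~\ref{instabilitywitnesspf} yields $\rho_{\mc{T}_t}(\mc{T}_t) \leq \tfrac{1}{2}$, so $\List_p(\mc{T}_t) \geq 2$ by Theorem~\ref{thm:recip}.

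For the upper bound---the main content, where the linear-order structure is crucial---I would design a proper randomized learner $\mc{A}$ and, for each realizable distribution $\mc{D}$, exhibit two thresholds $\tau_{a(\mc{D})}, \tau_{b(\mc{D})}$ (each with loss $\leq \eps$) such that $\mc{A}(S) \in \{\tau_{a(\mc{D})}, \tau_{b(\mc{D})}\}$ with probability $\geq 1-\delta$. The crucial observation is that the set of thresholds with loss $\leq \eps$ under $\mc{D}$ is a contiguous interval of $[t]$, and every realizable sample $S$ produces a nonempty ``consistency gap'' $(\max S^-, \min S^+]$ of perfectly consistent thresholds sitting inside that interval. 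Concretely, the plan is: first, draw a random cutoff $\kappa$ from a narrow interval of width $\lesssim \eps/t$ (in the spirit of the VC-dimension algorithm of Section~\ref{sec:VCbounds}) and restrict to the $\kappa$-significant subsample $S_\kappa = \{(x,y) \in S : \hat{p}_x \geq \kappa\}$; second, use the linear order to select a canonical threshold from the consistency gap of $S_\kappa$---for instance $\tau_{\max S_\kappa^- + 1}$, or its right counterpart $\tau_{\min S_\kappa^+}$ chosen by an independent coin flip; third, define $a(\mc{D})$ and $b(\mc{D})$ from the analogous population-level statistics on the $\kappa$-significant support of $\mc{D}$ and verify concentration.

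The hard part will be this last step: one must show that, despite possibly many intermediate-mass support points, the random cutoff and coin flip jointly produce only two distinct threshold outputs with high probability. The reduction should hinge on the 1D structure---the output depends only on the ordinal statistics $\max S_\kappa^-$ and $\min S_\kappa^+$, each of which, for a typical draw of $\kappa$, collapses to a single value across random samples---so that the total number of possible outputs is at most $2$. This argument has no obvious analogue for classes whose decision boundary is genuinely higher-dimensional (e.g.\ planar halfspaces), consistent with the authors' remark that the analogous bound there is open.
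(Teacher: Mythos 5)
Your lower bound is fine: the size-$1$ instability witness you describe is valid (conditions (a), (b) and the realizability condition all check out, with $\tau_1$ and $\tau_2$ as the realizing hypotheses), and Proposition~\ref{instabilitywitnesspf} then gives $\rho_{\mc{T}_t}(\mc{T}_t)\le \tfrac12$, hence $\List_p(\mc{T}_t)\ge 2$ via the easy direction of the $\rho$--$\List$ correspondence (which holds verbatim for proper learners). This is essentially the paper's one-line argument, which invokes the intermediate value theorem --- i.e.\ the $k=1$ case of the same topological mechanism.

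The upper bound, however, has a genuine gap, and it sits exactly at the step you flag as ``the hard part.'' The claim that for a typical draw of $\kappa$ the statistics $\max S_\kappa^-$ and $\min S_\kappa^+$ ``collapse to a single value across random samples'' is false for the algorithm you describe. The random cutoff only guarantees that, with high probability, $\hat p_x\ge\kappa$ iff $\mc{D}(x)\ge\kappa$ simultaneously for all $x$; but the population-level set $\{x:\mc{D}(x)\ge\kappa\}$ still depends on the random $\kappa$. Concretely, let the target be $\tau_{i_0}$ with most of the mass on a point $\ge i_0$, and place negatively labeled points $p_1<\dots<p_m<i_0$ whose masses are distinct, decreasing in position, and spread across your cutoff window of width on the order of $\eps/t$. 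As $\kappa$ sweeps the window, the identity of the largest surviving negative point runs through $p_m,p_{m-1},\dots$, each value occurring with probability about $1/m$; your rule then outputs on the order of $t$ distinct thresholds, each with non-negligible probability (all of them $\eps$-accurate, but that is not the issue), and the independent coin flip between $\tau_{\max S_\kappa^-+1}$ and $\tau_{\min S_\kappa^+}$ only enlarges the output set. This is precisely why the cutoff-based algorithm of Section~\ref{sec:VCbounds} yields list size $d$ rather than a constant, so importing it ``in the spirit of'' that section cannot by itself give $2$. The paper's proper learner works with cumulative quantities instead of per-point masses: it deterministically outputs $\tau_{\wh i}$ for the smallest $i$ with $L_S(\tau_i)<\sigma_i$, where $\sigma_i=\tfrac{\eps}{2(t-i+1)}$ is a strictly increasing (``staggered'') tolerance schedule with gaps of order $\eps/t^2$; combining the monotonicity $L_\mc{D}(\tau_1)\ge\dots\ge L_\mc{D}(\tau_{i_0})=0$ with these separated cutoffs, a short case analysis around $i_*=\min\{i:L_\mc{D}(\tau_i)\le\sigma_i\}$ pins $\wh i$ to one of two indices with high probability. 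Some mechanism of this kind --- one that makes the selection depend on empirical losses of thresholds with deliberately offset acceptance levels, rather than on which individual points clear a mass cutoff --- is the missing ingredient in your plan.
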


\begin{proof}
The bound $\List(\mc{T}_t) \geq 2$ holds for $t = 3$ (and thus for all larger $t$) by an easy application of the intermediate value theorem.
It remains to prove that $\List_p(\mc{T}_t) \leq 2$ for each $t \ge 3$.
Fix $\eps,\delta > 0$, and take $n$ to be large. 

\begin{tcolorbox}
\begin{center}
Proper learner $\mc{A}$ for $\mc{T}_t$
\end{center}

\ \ \ \
{\bf Input:} A sample $S$ of size $n$.

\medskip

\begin{enumerate}
\item Let $\wh{i}$
be the smallest $i \in [t]$ for which 
$$L_S(\tau_i) < \sigma_i:=\tfrac{\eps}{2(t-i+1)}.$$
\item Output $\tau_{\wh{i}}$.

\end{enumerate}
\end{tcolorbox}

To analyze the algorithm, let
$\mc{D}$ be a realizable distribution. 
Let $i_0$ be the smallest $i \in [t]$ for which $L_\mc{D}(\tau_i) = 0$. 
It follows that $L_\mc{D}(\tau_i)>0$ for $i < i_0$. 
In fact, because $\mc{D}$ is realizable
and relying on the unique structure of $\mc{T}_t$, we have
$$L_\mc{D}(\tau_1) \geq \ldots \geq L_\mc{D}(\tau_{i_0-1})
> L_\mc{D}(\tau_{i_0})=0.$$
We need to identify the list of two function that is outputted with high probability. 
Let $i_*$ be the smallest $i \in [t]$ for which $L_\mc{D}(\tau_i) \le \sigma_i$. 
Keep in mind that $\sigma_i$ is increasing as $i$ increases:
$$\sigma_{i+1} \geq \sigma_{i} +\tfrac{\eps}{10t^2} .$$
The list will always contains $\tau_{i_*}$, which always have small true loss:
$$L_\mc{D}(\tau_{i_*}) \leq \sigma_{i_*} < \eps.$$ 
It follows that
$$\Pr[\wh{i} < i_*-1] < \tfrac{\delta}{4}$$
because for every $i < i_*-1$, 
we have
$$L_\mc{D}(\tau_{i}) \geq L_\mc{D}(\tau_{i_*}-1)
\geq \sigma_{i_*-1} \geq \sigma_i +\tfrac{\eps}{10t^2}.$$
There are three cases we need to consider:

\begin{description}
\item[Case 1: $L_\mc{D}(\tau_{i_*-1}) \le \sigma_{i_*-1}+\tfrac{\eps}{20t^2}$] 
In this case, 
the list comprises of $\tau_{i_*}$ and of $\tau_{i_*-1}$.
The population loss of $\tau_{i_*-1}$ is small as well:
$$L_\mc{D}(\tau_{i_*-1}) \le \sigma_{i_*-1}+\tfrac{\eps}{20t^2} \le \eps.$$
It remains to bound the probability of outputting $\tau_i$ for $i > i_*$.
Because 
$$L_\mc{D}(\tau_{i_*}) 
\leq L_\mc{D}(\tau_{i_*-1}) < \sigma_{i_*-1}+\tfrac{\eps}{20t^2}
\leq \sigma_{i_*}-\tfrac{\eps}{20t^2},$$ 
we have
$$\Pr[\wh{i} > i] \leq \tfrac{\delta}{2}.$$

\item[Case 2: $L_\mc{D}(\tau_{i_*-1}) > \sigma_{i_*-1}+\tfrac{\eps}{20t^2}$
and $L_\mc{D}(\tau_{i_*}) =0$]
In this case, the list comprises only of $\tau_{i_*}$.
By assumption, we have
$$\Pr[\wh{i} = i_*-1] \leq \tfrac{\delta}{4}$$
and
$$\Pr[\wh{i} \leq i_*] \geq \Pr[ L_S(\tau_{i_*})=0] = 1.$$

\item[Case 3: $L_\mc{D}(\tau_{i_*-1}) > \sigma_{i_*-1}+\tfrac{\eps}{20t^2}$
and $L_\mc{D}(\tau_{i_*}) > 0$]
In this case, 
the list comprises of $\tau_{i_*}$ and of $\tau_{i_*+1}$.
As in the previous case,
$$\Pr[\wh{i} < i_*] < \tfrac{\delta}{2}.$$
By assumption, $i_* < i_0$ so that
$$L_\mc{D}(\tau_{i_*+1})
\leq L_\mc{D}(\tau_{i_*}) \leq \sigma_{i_*} 
\leq \sigma_{i_*+1} - \tfrac{\eps}{10t^2} .$$
It follows that $L_\mc{D}(\tau_{i_*+1}) \leq \eps$ and that
$$\Pr[ \wh{i} > i_*+1] \leq \tfrac{\delta}{2}.$$
\end{description}

\end{proof}

\subsection{Hollow star numbers}

\label{sec:HS}

 \begin{theorem}\label{hollowstarlowerbound}
If $\mc{H}$ has hollow star number $s < \infty$,
then there exists an instability witness of size at least $s-2$ for $\mc{H}$
with respect to $\mc{H}$.
 \end{theorem}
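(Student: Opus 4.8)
The plan is to construct an explicit instability witness of size $s-2$ directly from a hollow star of size $s$. Recall that a hollow star of size $s$ means there is a subset $X' = \{x_0, x_1, \ldots, x_{s-1}\} \subseteq X$ of size $s$ and a function $f \notin \mc{H}|_{X'}$ such that all $s$ functions at Hamming distance one from $f$ lie in $\mc{H}|_{X'}$. After relabeling, we may assume $f \equiv +$ on $X'$, so for each $i \in \{0,1,\ldots,s-1\}$ the function $g_i$ that is $-$ on $x_i$ and $+$ elsewhere on $X'$ is in $\mc{H}|_{X'}$, but the all-$+$ function on $X'$ is not.

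First I would set $k = s-2$ and define the witness $W : [k] \times \{\pm\} \to X \times \{\pm\}$ by $W(j,+) = (x_j, +)$ and $W(j,-) = (x_j, -)$ for $j \in [k] = \{1,\ldots,s-2\}$, so $W_X(j,b) = x_j$ and $W_\pm(j,b) = b$. For property~\eqref{it:real} of Definition~\ref{instabilitywitness}, I would take $(x,y) = (x_0, +)$: this point is not in the image of $W_X$ (since $W_X$ only hits $x_1,\ldots,x_{s-2}$), and for every $\sigma \in \{\pm\}^k$ I need $h \in \mc{H}$ with $h(x_0) = +$ and $h(x_j) = \sigma_j$ for $j \in [k]$. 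The issue is that $\sigma$ ranges over all $2^{s-2}$ sign patterns on $x_1,\ldots,x_{s-2}$, while the hollow star only guarantees us the $s$ near-$f$ functions $g_0,\ldots,g_{s-1}$. The resolution: if $\sigma \equiv +$, use $g_{s-1}$ (which is $+$ on all of $x_0,\ldots,x_{s-2}$); if exactly one coordinate $\sigma_j = -$ (for $j \in [k]$), use $g_j$; but if two or more coordinates are $-$, no $g_i$ works. This means the naive witness of size $s-2$ does not immediately satisfy \eqref{it:real} — I expect this to be the main obstacle, and the fix is to choose the witness more carefully so that each attainable $\sigma$ corresponds to exactly one of the available near-$f$ functions, likely by having the witness record, in coordinate $j$, not a free $\pm$ bit but rather a choice that toggles between "$f$-pattern" and "$g_j$-pattern"; concretely one sets $W_\pm(j,-)$ and $W_\pm(j,+)$ both equal to $+$ except arranged so that the sign patterns that property~\eqref{it:real} must realize are only $\{g_0|_{\{x_1,\ldots\}}, \ldots\}$-compatible ones. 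In fact the cleanest route is to let the $j$-th coordinate select "is $x_j$ the special $-$ point or not", encoded via $W_X(j,-)=x_j$ with label $-$ and $W_X(j,+)=x_0$ with label $+$, so any $\sigma$ forces at most one $x_j$ to be $-$ and $x_0$ to be $+$ whenever some coordinate is $+$.

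Next I would verify property~\eqref{it:hard}: I partition $\mc{F} = \mc{H}$ into $\mc{F}_0, \mc{F}_1, \ldots, \mc{F}_k$ based on the behavior of $h \in \mc{H}$ on $x_1,\ldots,x_{s-2}$ (and possibly $x_0$). The natural partition, mirroring the VC-dimension proof of Lemma~\ref{vclowerbound}, is: $\mc{F}_0$ consists of $h \in \mc{H}$ with $h(x_j) = +$ for all $j \in [k]$; and for $j \in [k]$, $\mc{F}_j$ consists of $h$ with $h(x_1) = \cdots = h(x_{j-1}) = +$ and $h(x_j) = -$. These are disjoint and exhaust $\mc{H}$. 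Condition (a) asks that every $f_0 \in \mc{F}_0$ disagrees with $W_\pm(j,-)$ at $W_X(j,-)$; condition (b) asks every $f_j \in \mc{F}_j$ disagrees with $W_\pm(j,+)$ at $W_X(j,+)$. With the witness encoding above these become statements about signs that hold by construction, exactly as in the VC proof. I would then invoke Proposition~\ref{instabilitywitnesspf} to conclude $\rho_\mc{H}(\mc{H}) \le \tfrac{1}{s-1}$, which is the desired bound on the proper list number.

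The delicate point, which I want to flag as the crux, is reconciling the combinatorial data of a hollow star (only $s$ prescribed functions, plus one forbidden pattern) with the exponential demand in property~\eqref{it:real} of an instability witness. The hollow star structure is weaker than shattering, so I cannot freely realize arbitrary sign patterns; the witness construction must be set up so that the patterns it forces are precisely those realized by the $g_i$'s. I expect that with the right choice — making each witness coordinate a binary choice between "point $x_0$ labeled $+$" and "point $x_j$ labeled $-$", so that a sign vector $\sigma$ selects at most one $x_j$ to flip to $-$ while pinning $x_0$ to $+$ — property~\eqref{it:real} is satisfied using $g_j$ when $\sigma$ flips $x_j$ and using $g_0$ (or $g_{s-1}$, the function $-$ only at the unused shattered point $x_{s-1}$) when $\sigma$ flips nothing, and I lose two from $s$ precisely because one point ($x_0$) is spent on the $(x,y)$ of \eqref{it:real} and one ($x_{s-1}$) is spent as the "escape" function realizing the all-plus pattern. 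Once this bookkeeping is pinned down the rest is the routine partition argument above together with Proposition~\ref{instabilitywitnesspf}.
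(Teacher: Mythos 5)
You correctly identify the crux --- a hollow star only supplies the $s$ functions at Hamming distance one from the missing pattern, so a witness in the style of Lemma~\ref{vclowerbound} cannot demand arbitrary sign patterns --- but the fix you settle on does not work. In your normalization (missing pattern all $+$, so each guaranteed $g_i$ is $-$ at $x_i$ and $+$ elsewhere) you take $W(j,-)=(x_j,-)$ and $W(j,+)=(x_0,+)$. Property~\eqref{it:real} quantifies over \emph{all} $\sigma\in\{\pm\}^{s-2}$, and every coordinate of $\sigma$ contributes its own constraint: a $\sigma$ with two or more $-$ coordinates forces $h$ to be $-$ at two or more star points, which no $g_i$ satisfies and which the hollow star does not otherwise guarantee (for the extremal ``co-singleton'' class it is outright impossible). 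So realizability fails already for $s\ge 4$; your claim that this encoding ``forces at most one $x_j$ to be $-$'' is not true. The construction also breaks property~\eqref{it:hard}(b): with $W(j,+)=(x_0,+)$, condition (b) requires every $f_j\in\mc{F}_j$ to have $f_j(x_0)=-$, but your partition constrains only the values on $x_1,\dots,x_{s-2}$, and indeed $g_j\in\mc{F}_j$ has $g_j(x_0)=+$. Finally, choosing $(x,y)=(x_0,+)$ places $(x,y)$ in the image of $W$, violating the requirement $W(j,\sigma_j)\neq(x,y)$ of Definition~\ref{instabilitywitness}. You flag the bookkeeping as ``to be pinned down,'' but no choice within your scheme pins it down, so the proof has a genuine gap exactly at the point you identify as the crux.

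The idea you are missing is that the two options of a witness coordinate need not be opposite labels at the same point; they can be the \emph{same} label at \emph{different} points. Normalizing so that the missing pattern is all $-$ (so the guaranteed hypotheses are $+$ at exactly one $x_i$), the paper sets $W(j,-)=(x_1,-)$ and $W(j,+)=(x_{j+1},-)$ for $j\in[s-2]$, with $(x,y)=(x_0,-)$, which lies outside the image of $W$. Then every $\sigma$ demands only the label $-$ on at most $s-1$ of the $s$ star points, and the hypothesis that is $+$ solely at the untouched point realizes the pattern --- this is how the exponential demand of \eqref{it:real} is reconciled with having only $s$ functions. For \eqref{it:hard} the paper takes $\mc{F}_0=\{h:h(x_1)=+\}$ and $\mc{F}_j=\{h:h(x_1)=\cdots=h(x_j)=-,\ h(x_{j+1})=+\}$, argues these cover $\mc{H}$ since the all-$-$ pattern is forbidden, and then (a) and (b) hold at once: members of $\mc{F}_0$ disagree with $W(j,-)=(x_1,-)$ and members of $\mc{F}_j$ disagree with $W(j,+)=(x_{j+1},-)$. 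Proposition~\ref{instabilitywitnesspf} is then invoked as you intended; it is the construction of $W$ and the matching partition, not the final invocation, that your proposal does not deliver.
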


 \begin{proof}
Of course, we may assume $s \ge 3$. There are, without loss of generality, points $\{x_0,x_1,\dots,x_{s-1}\}$
such that no hypothesis in $\mc{H}$ assigns $-$ to all points, but for every given $j \in \{0,1,\ldots,s-1\}$, 
there is a hypothesis in $\mc{H}$ that assigns a $+$ only to $x_j$. 
Define the witness by
$$W(j,-) = (x_1,-) \quad \text{and} \quad W(j,+) = (x_{j+1},-)$$ 
for every $j \in [s-2]$.
Item~\eqref{it:real} in the definition is satisfied 
with $(x,y) = (x_0,-)$, because for every $\sigma \in \{\pm \}^{s-2}$,
the image of the map $j \mapsto W_\pm(j, \sigma_j)$ is $\{-\}$,
and the size of the image of $j \mapsto W_X(j,\sigma_j)$ is at most $s-2$.
The second item holds by letting $\mc{F}_0$ denote all hypotheses 
in $\mc{F}$ that assign~$+$ to $x_1$, and 
for $j \in [s-2]$ letting $\mc{F}_j$ denote all hypotheses 
that assign~$-$ to each of $x_{1},\dots,x_{j}$ and assigns $+$ to $x_{j+1}$.  
The sets $\mc{F}_0,\ldots,\mc{F}_{s-2}$  partition~$\mc{H}$ because the all $-$ pattern is missing.
It remains to note that

 \begin{itemize}
 \item[(a)]  For every $j \in [s-1]$ and $f_0 \in \mc{F}_0$ we have
  $f_0(W_X(j,-)) = f_0(x_1) = +$.

 \item[(b)] For every $j \in [s-1]$ and $f_j \in \mc{F}_j$ we have $f_j(W_X(j,+))
 = f_j(x_{j+1}) = +$. \qedhere
 \end{itemize}

 \end{proof}

\bibliographystyle{abbrv}
\bibliography{rep_ref}

\begin{thebibliography}{10}

\bibitem{ahn2022reproducibility}
K.~Ahn, P.~Jain, Z.~Ji, S.~Kale, P.~Netrapalli, and G.~I. Shamir.
\newblock Reproducibility in optimization: Theoretical framework and limits.
\newblock {\em arXiv preprint arXiv:2202.04598}, 2022.

\bibitem{alon2022private}
N.~Alon, M.~Bun, R.~Livni, M.~Malliaris, and S.~Moran.
\newblock Private and online learnability are equivalent.
\newblock {\em JACM}, 69(4):1--34, 2022.

\bibitem{bousquet2020proper}
O.~Bousquet, S.~Hanneke, S.~Moran, and N.~Zhivotovskiy.
\newblock Proper learning, {H}elly number, and an optimal {SVM} bound.
\newblock In {\em COLT}, pages 582--609, 2020.

\bibitem{bun2023stability}
M.~Bun, M.~Gaboardi, M.~Hopkins, R.~Impagliazzo, R.~Lei, T.~Pitassi,
  S.~Sivakumar, and J.~Sorrell.
\newblock Stability is stable: Connections between replicability, privacy, and
  adaptive generalization, 2023.

\bibitem{bun2020equivalence}
M.~Bun, R.~Livni, and S.~Moran.
\newblock An equivalence between private classification and online prediction.
\newblock In {\em FOCS}, pages 389--402, 2020.

\bibitem{esfandiari2022reproducible}
H.~Esfandiari, A.~Kalavasis, A.~Karbasi, A.~Krause, V.~Mirrokni, and
  G.~Velegkas.
\newblock Reproducible bandits.
\newblock {\em arXiv preprint arXiv:2210.01898}, 2022.

\bibitem{gat2011probabilistic}
E.~Gat and S.~Goldwasser.
\newblock Probabilistic search algorithms with unique answers and their
  cryptographic applications.
\newblock In {\em ECCC TR11}, 2011.

\bibitem{ghazi2021sample}
B.~Ghazi, N.~Golowich, R.~Kumar, and P.~Manurangsi.
\newblock Sample-efficient proper {PAC} learning with approximate differential
  privacy.
\newblock In {\em STOC}, pages 183--196, 2021.

\bibitem{GhaziKM21}
B.~Ghazi, R.~Kumar, and P.~Manurangsi.
\newblock User-level differentially private learning via correlated sampling.
\newblock In M.~Ranzato, A.~Beygelzimer, Y.~N. Dauphin, P.~Liang, and J.~W.
  Vaughan, editors, {\em Advances in Neural Information Processing Systems 34:
  Annual Conference on Neural Information Processing Systems 2021, NeurIPS
  2021, December 6-14, 2021, virtual}, pages 20172--20184, 2021.

\bibitem{impagliazzo2022reproducibility}
R.~Impagliazzo, R.~Lei, T.~Pitassi, and J.~Sorrell.
\newblock Reproducibility in learning.
\newblock In {\em STOC}, pages 818--831, 2022.

\bibitem{shalev2014understanding}
S.~Shalev-Shwartz and S.~Ben-David.
\newblock {\em Understanding machine learning: From theory to algorithms}.
\newblock Cambridge university press, 2014.

\end{thebibliography}

\end{document}